\def\eqref#1{equation~\ref{#1}}
\def\1{\bm{1}}
\DeclareMathAlphabet{\mathsfit}{\encodingdefault}{\sfdefault}{m}{sl}
\SetMathAlphabet{\mathsfit}{bold}{\encodingdefault}{\sfdefault}{bx}{n}
\newcommand{\mpara}[1]{\medskip\noindent{\bf #1}}
\newtheorem{RQ}{RQ}
\newtheorem{theorem}{Theorem}
\newtheorem{proposition}{Proposition}
\newtheorem{definition}{Definition}
\newcommand{\approach}{\textsc{Zorro}}
\newcommand{\gnnexp}{GNNExplainer}
\newcommand{\fidelity}{RDT-Fidelity}
\begin{document}

\title{\approach{}: Valid, Sparse, and Stable Explanations in Graph Neural Networks}

\author{Thorben~Funke,
        Megha~Khosla,
        Mandeep Rathee,
        and~Avishek~Anand%
\IEEEcompsocitemizethanks{\IEEEcompsocthanksitem T. Funke and M. Rathee  are with the L3S Research Center, Leibniz University Hanover,  Germany.
\protect\\
Email: tfunke@l3s.de
\IEEEcompsocthanksitem M. Khosla and A. Anand are with TU Delft, Netherlands.
}%
\thanks{Manuscript received April 19, 2005; revised August 26, 2015.}
}

\markboth{Journal of \LaTeX\ Class Files,~Vol.~14, No.~8, August~2015}%
{Shell \MakeLowercase{\textit{et al.}}: Bare Advanced Demo of IEEEtran.cls for IEEE Computer Society Journals}

\IEEEtitleabstractindextext{%
\begin{abstract}%
With the ever-increasing popularity and applications of graph neural networks, several proposals have been made to explain and understand the decisions of a graph neural network.
Explanations for graph neural networks differ in principle from other input settings. 
It is important to attribute the decision to input features and other related instances connected by the graph structure.  
We find that the previous explanation generation approaches that maximize the mutual information between the label distribution produced by the model and the explanation to be restrictive. 
Specifically, existing approaches do not enforce explanations to be valid, sparse, or robust to input perturbations. 
In this paper, we lay down some of the fundamental principles that an explanation method for graph neural networks should follow and introduce a metric \textit{\fidelity{}} as a measure of the explanation's effectiveness.
We propose a novel approach Zorro based on the principles from \textit{rate-distortion theory} that uses a simple combinatorial procedure to optimize for \fidelity{}.
Extensive experiments on real and synthetic datasets reveal that Zorro produces sparser, stable, and more faithful explanations than existing graph neural network explanation approaches.  \end{abstract}

\begin{IEEEkeywords}
Explainability, Graph Neural Networks, Interpretability.
\end{IEEEkeywords}}

\maketitle

\IEEEdisplaynontitleabstractindextext

\IEEEpeerreviewmaketitle

\ifCLASSOPTIONcompsoc
\IEEEraisesectionheading{\section{Introduction}\label{sec:introduction}}
\else
\section{Introduction}
\label{sec:introduction}
\fi

Graph neural networks (GNNs) are a flexible and powerful family of models that build representations of nodes or edges on irregular graph-structured data and have experienced significant attention in recent years. 
GNNs are based on the ``neighborhood aggregation'' scheme, where a node representation is learned by aggregating features from their neighbors. 
Learning complex neighborhood aggregations and latent feature extraction has enabled GNNs to achieve state-of-the-art performance on node and graph classification tasks.
This complexity, on the other hand, leads to a more opaque and non-interpretable model.
To alleviate the problem of interpretability, we focus on explaining the rationale underlying a given prediction of already trained GNNs. 

\begin{figure*}
    \centering
    \includegraphics[width=0.87\textwidth]{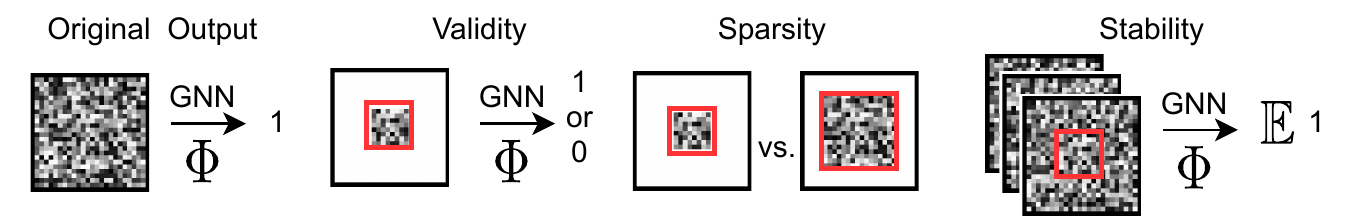}
    \caption{Illustration of validity, sparsity, and stability. The GNN $\Phi$ takes the feature matrix $X$, which is illustrated as a grayscale matrix, and the relations from the graph $G$, which is not shown for simplicity, to predict the class label (1). An explanation selects the most important inputs from the feature matrix responsible for the prediction, which we illustrate as red rectangles. The validity of an explanation is the property to preserve the prediction if a fixed baseline value replaces all not selected values. The sparsity is the number of selected elements, where fewer elements are desirable. Lastly, stability is the property to preserve the prediction if all not selected values are perturbed.
    Existing methods only optimize for validity and sparsity.
    However, even trivial explanations can be valid and sparse. 
    We introduce \textit{\fidelity{}} as the new measure, which also accounts for explanation stability, and present \approach{} to retrieve valid, sparse, and stable explanations. 
    }
    \label{fig:overview}
\end{figure*}

There are diverse notions and regimes of explainability and interpretability for machine learning models -- (a) interpretable models vs.\ post-hoc explanations, (b) model-introspective vs.\ model-agnostic explanations, (c) outputs in terms of feature vs.\ data attributions~\cite{ribeiro2016:lime,lundberg2017:shap}. In this work, we aim to explain the decision of an already trained GNN model, i.e., \textit{compute post-hoc explanations for a trained GNN}.
Additionally, we do not assume any access to the trained model parameters, i.e., we are model-agnostic or black-box regime.
Finally, our explanation attributes the reason for an underlying GNN prediction to either a subset of features or neighboring nodes or both.

There has been recent interest in designing explainers for GNNs that produce feature attributions in a post-hoc manner~\citep{ying2019:gnnexplainer, vu2020pgm, sanchez2020evaluating} where a combination of nodes, edges, or features is retrieved as an explanation.
We introduce some essential notions of validity, sparsity, and stability for explaining GNNs and argue that many of the existing works on explainable GNNs do not satisfy these principles. 
To systematically fill in the above gaps, we commence by formulating three desired properties of a GNN explanation: \textit{validity}, \textit{sparsity}, and \textit{stability}. Figure~\ref{fig:overview} provides an illustration of these three properties.

\mpara{Validity.} Existing explanations approaches used for explaining GNNs like gradient-based feature attribution techniques select nodes or features~\citep{sundararajan2017:integratedgrad} are not optimized to be valid as well as being explanatory. 
An explanation is valid if just the explanation (a subset of features and nodes) as input would be sufficient to arrive at the same prediction.

\mpara{Sparsity.} It is easy to see that validity alone is not sufficient for an explanation as the entire input is a valid explanation~\citep{wolf2019formal}. Ideally, the explanation should only highlight those parts of the input with the highest discriminative information. 
Existing explanation approaches accomplish this by outputting distributions or \textit{soft-masks} over input features or nodes~\citep{ying2019:gnnexplainer}.
However, humans find it hard to make sense of soft masks and instead prefer sparse binary masks or \textit{hard masks}~\citep{baan2019transformer, pruthi2019learning,kindermans2019reliability,zhang2019dissonance}. 
We define \textit{sparsity} as the \textit{size of the explanation in terms of number of non-zero elements} in the explanation. 
A sparse explanation in the form of a hard mask is, therefore, more desirable and reduces ambiguities due to soft masks~\citep{jacovi2020aligning}.

\mpara{Stability.} Validity and sparsity, though necessary, are not sufficient to define an explanation. 
In Section~\ref{sec:valsparse}, we show that the trivial empty explanation (all features are replaced by $0$s) could be a valid explanation for many cases. 
The high validity observed in such cases is an artifact of a particular configuration of trained model parameters. 
We would rather expect that the model retains its predicted class with only the knowledge of the explanation \textit{while the rest of the information in the input is filled randomly}. In other words, an explanation should be valid independent of the rest of the input. 
We say that an explanation is \textit{stable} if the behavior of the GNN is unaffected by the features outside of the explanation.
Most of the existing works do not consider stability in their modeling of explanation approaches.

In this paper, we introduce a new metric called \textit{\fidelity{}} which is grounded in the principles of \textit{rate-distortion theory} and reflects these three desiderata into a single measure. 
Essentially, we cast the problem of finding explanations given a trained model as a signal/message reconstruction task involving a sender, a receiver, and a noisy channel. 
The message sent by the sender is the actual feature vector, with the explanation being a subset of immutable feature values. 
The noisy channel can obfuscate only the features that do not belong to the explanation. The explanation's \fidelity{} lies in the degree to which the decoder can faithfully reconstruct from the noisy feature vector. 
Optimizing \fidelity{} is NP-hard, and we consequently propose a greedy combinatorial procedure \approach{} that generates sparse, valid, and stable explanations. 

Accurately measuring the actual effectiveness of post-hoc explanations has been acknowledged to be a challenging problem due to the lack of explanation ground truth.
We carry out an extensive and comprehensive experimental study on several experimental regimes~\citep{hooker2019benchmark,ying2019:gnnexplainer,sanchez2020evaluating}, three real-world datasets~\citep{yang2016revisiting} and four different GNN architectures~\citep{kipf2017semi, velickovic2018graph, klicpera2018predict, xu2018how} to evaluate the effectiveness of our explanations. In addition to measuring validity, sparsity, and \fidelity{}, we also compare our approach with respect to the evaluation regime proposed in \gnnexp{}, the faithfulness measure proposed in \cite{sanchez2020evaluating} and the ROAR methodology from \citep{hooker2019benchmark}.

Firstly, we establish that \approach{} outperforms all other baselines over different evaluation regimes on both real-world and synthetic datasets.
Based on \approach{}'s explanation, we retrieve valuable insights into the GNN's behavior: different GNN's derive their decisions from different large portions of the input, and more available features do not mean more relevant features; the GNN's base their classification on different scales on the local homophily; multiple disjoint explanations are possible, i.e., GNN's classification is derived from disjoint parts of the network (duplicated information flow).

To sum up, our main contributions are:
\begin{itemize} %
    \item We theoretically investigate the key properties of \textit{validity}, \textit{sparsity}, and \textit{stability} that a GNN explanation should follow. 
    \item We introduce a novel evaluation metric, \textit{\fidelity{}} derived from principles of \textit{rate-distortion theory} that reflects these desiderata into a single measure.
    \item We propose a simple combinatorial called \approach{} to find high \fidelity{} explanations with theoretically bounded stability. We \textbf{release our code} at \linebreak \url{https://github.com/funket/zorro}.
    \item We perform extensive scale experiments on synthetic and real-world datasets.
    We show that \approach{} not only outperforms baselines for \fidelity{} but also for several evaluation regimes so far proposed in the literature.
\end{itemize}
\section{Related Work}
\label{sec:relatedwork}

Representation learning approaches on graphs encode graph structure with or without node features into low-dimensional vector representations, using deep learning and nonlinear dimensionality reduction techniques.  These representations are trained in an unsupervised  \citep{perozzi2014deepwalk,khosla19comparative,Funke2020Low-dimensional} or semi-supervised manner by using neighborhood aggregation strategies and task-based objectives \citep{kipf2017semi,velickovic2018graph}.

\subsection{Explainability in Machine Learning.} Post-hoc approaches to model explainabiliy are popularized by \textit{feature attribution} methods that aim to assign importance to input features given a prediction either agnostic to the model parameters~\citep{ribeiro2018:anchors, ribeiro2016:lime} or using model specific attribution approaches~\citep{captioningxu2015:attention:nlp,binder2016:lrp,sundararajan2017:integratedgrad}.
\textit{ Instance-wise feature selection} (IFS) approaches \citep{chen2018:l2x,carter2019made,yoon2018invase}, on the other hand, focuses on finding a \textit{sufficient} feature subset or explanation that leads to little or no degradation of the prediction accuracy when other features are masked.
Applying these works directly for graph models is infeasible due to the complex form of explanation, which should consider the complex association among nodes and input features.

\subsection{Explainability in GNNs.}
Explainability approaches for explaining decisions on  a node level include soft-masking approaches \cite{ying2019:gnnexplainer,luo2020parameterized,schnake2020higherorder,lin2021generative,gao2021gnes,schlichtkrull2020interpreting}, Shapely based approaches~\cite{duval2021graphsvx,yuan2021explainability}, surrogate model based methods \cite{vu2020pgm,huang2020graphlime}, and gradients based methods~\cite{pope2019explainability,sanchez2020evaluating,kasanishi2021edge}. 
Soft-masking approaches like \gnnexp{} \cite{ying2019:gnnexplainer} learns a real-valued edge and feature mask such that the mutual information with GNN's predictions is maximized.  
 
An example of a surrogate model based method is PGMExplainer \cite{vu2020pgm} which builds a simpler interpretable Bayesian network  explaining the GNN prediction. 
Others adopt existing explanations approaches such as Shapely~\cite{duval2021graphsvx,yuan2021explainability}, layer-wise relevance propagation~\cite{schnake2020higherorder}, causal effects~\cite{lin2021generative} or LIME~\cite{huang2020graphlime,kasanishi2021edge}, to graph data.
 
The key idea in gradient based methods is to use the gradients or hidden feature map values to approximate input importance. This approach is the most straightforward solution to explain deep models and is quite popular for image and text data.
For graph data, \cite{pope2019explainability} and \cite{sanchez2020evaluating} applied gradient based methods for explaining GNNs, which rely on propagating gradients/relevance from the output to the original model's input.

Another line of work that focuses on explaining decisions at a graph level includes XGNN \cite{XGNN2020} and GNES~\cite{gao2021gnes}. XGNN proposed a reinforcement learning-based graph generation approach to generate explanations for the predicted class for a graph. 
GNES jointly optimizes task prediction and model explanation by enforcing graph regularization and weak supervision on model explanations.
Other works \citep{kang2019explaine,IdahlKA19} focus on explaining unsupervised network representations, which is out of scope for the current work or are specific to the combination of GNNs and NLP~\cite{schlichtkrull2020interpreting}.  

Most of the existing approaches for explaining GNNs are based on soft-masking methods~\cite{pope2019explainability,ying2019:gnnexplainer,luo2020parameterized, schnake2020higherorder, lin2021generative,gao2021gnes}. 
However, soft masks are typically hard for humans to interpret than hard masks due to their low sparsity and inherent uncertainty~\cite{baan2019transformer, pruthi2019learning,kindermans2019reliability,zhang2019dissonance}.
Only a few hard-masking approaches for GNNs exist. 
PGMExplainer~\cite{vu2020pgm} defines explanation in terms of relevant neighborhood nodes influencing the model decision and does not consider node features. 
PGExplainer~\cite{luo2020parameterized} employs a parameterized model to generate soft edge masks with node representations (extracted from target GNN) as input. Unlike our approach, PGExplainer is not model agnostic. Like PGMExplainer, it also does not generate a feature-based explanation. 
SubgraphX~\cite{yuan2021explainability} optimizes for Shapely values based on a Monte Carlo tree search.

\section{Properties of GNN Explanations}
\label{ref:formal}

\subsection{Background on GNNs} Let $G=(V,E)$ be a graph where each node is associated with $d$-dimensional input feature vector. 
GNNs compute node representations by recursive aggregation and transformation of feature representations of their neighbors, which are finally used for label prediction. 
Formally for a $L$-layer GNN, let $\boldsymbol{x}_{n}^{(\ell)}$ denote the feature representation of node $n\in V$ at a layer $\ell \in L$ and ${\mathcal{N}}_{n}$ denotes the set of its direct neighbors. $\boldsymbol{x}_{n}^{(0)}$ corresponds to the input feature vector of $n$.
The $\ell$-th layer of a GNN can then be described as an aggregation of node features from the previous layer followed by a transformation operation. 
\begin{align}
\label{eq:agg}
    \boldsymbol{z}_{n}^{(\ell)}=&\operatorname{AGG}^{(\ell)}\left(\left\{\boldsymbol{x}_{j}^{(\ell-1)} \mid j \in{\mathcal{N}}_{n}\cup\{n\}\right\}\right) 
    \\
    \label{eq:trans}
    \boldsymbol{x}_{n}^{(\ell)}= & \operatorname{TRANSFORMATION} ^{(\ell)}\left(\boldsymbol{z}_{n}^{(\ell)}\right)
\end{align}
Each GNN defines its aggregation function, which is differentiable and usually a permutation invariant function. 
The transformation operation is usually a non-linear transformation, such as employing ReLU non-linear activation.
The final node's embedding ${z}_{n}^{(L)}$ is then used to make the predictions
 \begin{align}\label{eq:pred}
     \Phi(n)\leftarrow \operatorname{argmax}\operatorname{\sigma}(\boldsymbol{z}_{n}^{(L)}\mathbf{W}),
 \end{align}
where $\sigma$ is a sigmoid or softmax function depending on whether the node belongs to multiple or a single class. 
$\mathbf{W}$ is a learnable weight matrix. 
The $i$-th element of $\boldsymbol{z}_{n}^{(L)}\mathbf{W}$ corresponds to the (predicted) probability that node $n$ is assigned to some class $i$. 

\subsection{Defining GNN Explanation, Validity and Sparsity}
\label{sec:valsparse}
\begin{figure}
    \centering
    \includegraphics[width=\linewidth]{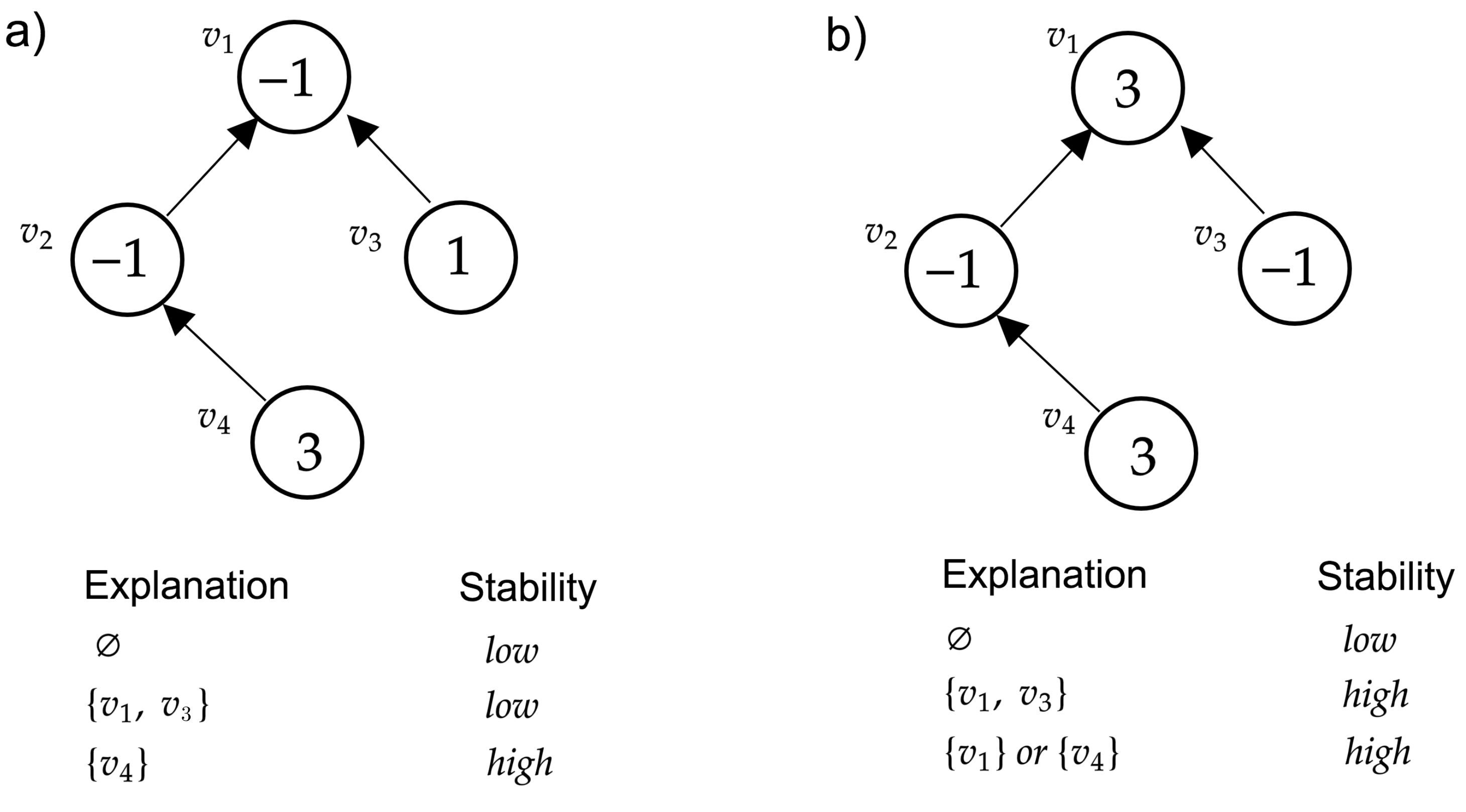}
    \caption{In this synthetic example, we approximate the (node) classification of $v_1$ by GNN with a rule based on the sum of the node features $\sum f(v_i)$.
    All given explanations are valid (when the unselected input is set to 0) and sparse. However, we see that in a) the explanation $\{v_1,v_3\}$ has the same stability as the trivial mask. Example b) highlights that selecting additional elements may not decrease the stability and that even two disjoint explanations are possible. 
    }
    \label{fig:toy_example}
\end{figure}

We are interested in explaining the prediction of the GNN $\Phi(n)$ for any node $n$.
We note that for a particular node $n$, the subgraph taking part in the computation of neighborhood aggregation operation, see Eq.~(\ref{eq:agg}), fully determines the information used by GNN to predict its class. In particular, for a $L$-layer GNN, this subgraph would be the graph induced on nodes in the $L$-hop neighborhood of $n$. For brevity, we call this subgraph the \textit{computational graph} of the query node. We want to point out that the term "computational graph" should not be confused with the neural network's computational graph.

Let $G(n)\subseteq G$  denote the computational graph of the node $n$. Let $X(n)$, or briefly $X$ denotes the feature matrix restricted to the nodes of $G(n)$, where each row corresponds to a $d$-dimensional feature vector of the corresponding node in the computational graph. We define explanation $\mathcal{S}=\{F_s, V_s\}$ as a subset of input features and nodes. In principle, $\mathcal{S}$ would correspond to the feature matrix restricted to features in $F_s$ of nodes in $V_s$.
We quantify the validity and sparsity of $\mathcal{S}$ as follows. 

\begin{definition}[Validity]
The validity score of explanation $\mathcal{S}$ is $1$ if $\Phi(\mathcal{S})=\Phi(X)$ and $0$ otherwise. 
\label{def:validity}
\end{definition}

In literature, the validity of an explanation is usually computed with respect to the baseline $0$, i.e., we set values of all features not in $\mathcal{S}$ to 0. An alternative is to use the average of the feature scores instead. 
As we discuss in Appendix~\ref{sec:app_further_metrics}, our validity is related to one of the metrics from~\citep{yuan2020explainability,singh2021valid,singh2021extracting}.

\begin{definition}[Sparsity]
The sparsity of an explanation is measured as the ratio of bits required to encode an explanation to those required to encode the input. We use explanation entropy to compare sparsity for a fixed input and call this the effective explanation size.
\label{def:sparsity}
\end{definition}
In contrast to other sparsity definitions, such as in~\citep{yuan2020explainability} our definition of sparsity is more general. It can be directly applied for both hard-masks and soft-masks without the need for any transformation.
Without loss of generality, we can assume that an explanation is a continuous mask over the set of features and nodes/edges where the mask value quantifies the importance of the corresponding element. We state the upper bound of the sparsity value in the following proposition.
\begin{proposition}
\label{prop:entropy}
Let $p$ be the normalized distribution of explanation (feature) masks. Then sparsity of an explanation is given by the entropy $H(p)$ and is bounded from above by $\log(|M|)$ where $M$ corresponds to a complete set of features or nodes. 
\end{proposition}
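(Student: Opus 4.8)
The plan is to make the informal sparsity definition precise and then invoke the classical maximum-entropy bound on a finite alphabet. First I would fix notation: let $\boldsymbol{m}=(m_1,\dots,m_{|M|})$ be the mask values over the complete set $M$ of features or nodes, where $m_i$ quantifies the importance of element $i$, and adopt the standing assumption (which I would state explicitly) that $\boldsymbol{m}$ is nonnegative and not identically zero. Normalizing then yields a genuine probability distribution $p=(p_1,\dots,p_{|M|})$ with $p_i=m_i/\sum_j m_j$, so that $p_i\geq 0$ and $\sum_i p_i=1$. Under Definition~\ref{def:sparsity}, the number of bits required to encode the explanation is the Shannon entropy $H(p)=-\sum_i p_i\log p_i$ (with the convention $0\log 0=0$), which by construction is the \emph{effective explanation size}; this identifies sparsity with $H(p)$ and treats soft masks and hard masks uniformly, since a hard mask selecting $k$ elements simply induces the uniform distribution on a support of size $k$.

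For the upper bound I would apply Jensen's inequality to the concave function $\log$, writing the entropy as an expectation under $p$:
\begin{equation}
H(p) = \sum_{i:\, p_i>0} p_i \log\frac{1}{p_i} \leq \log\left(\sum_{i:\, p_i>0} p_i\cdot\frac{1}{p_i}\right) = \log\big(|\operatorname{supp}(p)|\big) \leq \log(|M|),
\end{equation}
where the first inequality is Jensen applied to the random variable taking value $1/p_i$ with probability $p_i$, and the last step uses $|\operatorname{supp}(p)|\leq|M|$. Equivalently, I could argue via nonnegativity of relative entropy: letting $u$ denote the uniform distribution $u_i=1/|M|$, a one-line computation gives $D_{\mathrm{KL}}(p\,\Vert\,u)=\log(|M|)-H(p)\geq 0$, which yields the same bound. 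Either route makes the equality condition transparent: $H(p)=\log(|M|)$ holds precisely when $p$ is uniform over all of $M$, i.e.\ when every feature or node receives equal importance — the least sparse, maximally ambiguous explanation.

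The mathematics of the bound itself is entirely routine; it is the standard fact that entropy on a finite alphabet is maximized by the uniform distribution. The only genuine subtlety, and the step I would be most careful about, is the \emph{modeling} claim rather than the inequality: justifying that ``bits required to encode the explanation'' is faithfully captured by $H(p)$ and that the normalization is well-defined. I would therefore foreground the nonnegativity and non-degeneracy assumptions on the mask (so that $p$ exists), and close by remarking that for a hard mask the entropy reduces to $\log k$ with $k$ the number of selected elements, recovering the intuition that fewer selected elements means a sparser explanation.
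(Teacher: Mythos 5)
Your proposal is correct and takes essentially the same route as the paper's proof: normalize the mask values into a probability distribution $p$, identify sparsity with the entropy $H(p)$, and bound it above by the entropy of the uniform distribution, namely $\log(|M|)$. The only difference is one of rigor --- the paper simply asserts that entropy is maximized by the uniform distribution, whereas you actually prove this standard fact (via Jensen's inequality, with the Kullback--Leibler alternative) and make explicit the nonnegativity and non-degeneracy assumptions needed for the normalization to be well-defined.
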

\begin{proof}
 We first compute the normalized feature or node mask distribution, $p(f)$ for $f\in M$.
 In particular, denoting the mask value of $f$ by $\operatorname{mask}(f)$, we have
 $$p(f)= {\operatorname{mask}(f)\over \sum_{f'\in M} \operatorname{mask}(f')  }$$
 Then
$H(p)= -\sum_{f\in M} p(f) \log p(f)$ which achieves its maximum for the uniform distribution, i.e., $p(f)= \frac{1}{|M|}$.
\end{proof}

\subsection{Limitations of Validity and Sparsity}
We illustrate the limitations of previous works, which are based on maximizing validity and sparsity of explanations by a simple example shown in Figure~\ref{fig:toy_example}. 
The example is inspired by the example for text analysis from \cite{camburu2020struggles}.

The input is a graph with node set $V=\{v_1,v_2,v_3,v_4\}$. Each node has a single feature, with the value given in the Figure. Let us assume that these feature values lie in the range from $-2$ to $3$. For any node $v$, we define the model output in terms of simple sum (aggregation) of feature values, $f(v)$ of itself and its two hop-neighborhood. For example,
\[ \Phi(v_1)= \begin{cases} 1, \text{ if } f(v_1) +  f(v_2) +  f(v_3) + f(v_4) \ge 0,\\
0, \text{ otherwise}.
\end{cases}\]
 Now we wish to explain the prediction $\Phi(v_1)=1$. 
 
Consider an explanation $\{v_1,v_3\}$. Clearly it is valid explanation with a validity score of 1, if we set the not selected nodes' features to 0. But if we set $f(v_4)=-2, f(v_2)=0$, the explanation $\{v_1,v_3\}$ is no longer valid.

Similarly, the empty explanation, $\mathcal{S}=\emptyset$, is the sparsest possible explanation which has a validity score of $1$ when all feature values are set to $0$. However, for a different realization of the unimportant features, say for $f(v_2)=-1$ and rest all are set to $0$, the validity score is reduced to $0$. 

We want to emphasize that a particular explanation $\{v_1,v_3\}$ or an empty explanation might be valid for an individual configuration of the features of not selected vertices but not for others. However, a proper explanation should explain the model's prediction independent of the remaining input configuration.

This subtle point is usually ignored by existing explainability approaches, which only evaluate an explanation for a specific baseline of the irrelevant part of the input. In contrast, we propose stability, which takes into account the variance of the validity of an explanation over different configurations of the input's unselected parts.

\begin{definition}[Stability] Let $\mathcal{\mathcal{Y}}$ be a random variable sampled from the  distribution over validity scores for different realizations of $X\setminus \mathcal{S}$. Let $\operatorname{Var}(\mathcal{Y})$ denote the variance of $\mathcal{\mathcal{Y}}$. We define stability $\gamma(\mathcal{S})$ of an explanation, $\mathcal{S}$ as
$$ \gamma(\mathcal{S})= {1\over 1+ \operatorname{Var}(\mathcal{Y})}.$$
\end{definition}

Note that $\gamma(\mathcal{S})\in (0,1]$ holds and achieves, on the one hand, maximum value of $1$ if $\operatorname{Var}(\mathcal{Y})=0$, i.e., when the explanation is completely independent of components in $X\setminus \mathcal{S}$. 
On the other hand, the stability will also be equal to $1$ if the validity of an explanation for all realizations is equal to zero. 
Mathematically, we need to ensure a high expected value of $\mathcal{Y}$ in addition to its low variance. Therefore, we need another metric along with stability. 

To account for the stability of explanations, we introduce a novel metric called \textit{\fidelity{}} which has a sound theoretical grounding in the area of \textit{rate-distortion theory} \cite{sims2016rate}. We describe \fidelity{} and its relation to rate-distortion theory and stability in the next section. %
\section{Rate-Distortion Theory and \fidelity{}}
Rate-distortion theory addresses the problem of determining the minimum number of bits per symbol (also referred to as \textit{rate}) that should be communicated over a channel so that the source signal can be approximately reconstructed at the receiver without exceeding an expected distortion, $D$.
Mathematically we are interested in finding the conditional probability density function, ${Q(\mathcal{S}|X)}$, of the compressed signal or explanation $S$ given the input X such that the expected distortion $D(S,X)$ is upper bounded.

\begin{equation}
\label{eq:rdt}
    \inf_{Q(\mathcal{S}|X)} I_Q(\mathcal{S},X) \text{ such that } \mathbb{E}_{Q}({D(\mathcal{S},X)}) \le D^*,
\end{equation} 
where $I(\mathcal{S},X)$ denotes the mutual information between input $X$ and compressed signal $S$ and $D^*$  corresponds to maximum allowed distortion. Note that Eq.~(\ref{eq:rdt}) requires minimization of mutual information between $X$ and $\mathcal{S}$. Mutual information will be minimized when $\mathcal{S}$ is completely independent of $X$. 

In our explanation framework, the compressed signal $\mathcal{S}$ corresponds to an explanation. 
The effect of minimizing the mutual information between the compressed signal and the input, see Eq.~(\ref{eq:rdt}), would amount to minimize the size of $\mathcal{S}$. A trivial solution of the empty set is avoided by restricting the average distortion of $\mathcal{S}$ in the second part of the objective. 
In particular, compressed signal (explanation) should be such that knowing only about the input on $\mathcal{S}$ and filling in the rest of the information randomly will almost surely preserve the desired output signal (or class prediction).
 
In particular, for graph models, the explanation $\mathcal{S}$ which is a subset of input nodes as well as input features, is most relevant for a classification decision if the expected classifier score remains nearly the same when randomizing the remaining input $X\setminus \mathcal{S}$.

More precisely, we formulate the task of explaining the model prediction for a node $n$, as finding a partition of the components of its computational graph into a subset, $\mathcal{S}$ of relevant nodes and features, and its complement $\mathcal{S}^c$ of non-relevant components. 
In particular, the subset $\mathcal{S}$ should be such that fixing its value to the true values already determines the model output for almost all possible assignments to the non-relevant subset $\mathcal{S}^c$.  The subset $\mathcal{S}$ is then returned as an explanation. As it is a rate-distortion framework, we are interested in an explanation (compressed signal) with the maximum agreement (minimum distortion) with the actual model's prediction on complete input. This agreement, what we refer to as \textit{\fidelity{}} is quantified by the expected validity score of an explanation over all possible configurations of the complement set $\mathcal{S}^c$.

\subsection{\fidelity{}}
To formally define \fidelity{}, let us denote with $Y_{\mathcal{S}}$ the perturbed feature matrix obtained by fixing the components of the ${\mathcal{S}}$ to their actual values and otherwise noisy entries. 
The values of components in $\mathcal{S}^c$ are then drawn from some noisy distribution, $\mathcal{N}$.
Let $\mathcal{S}=\{V_s, F_s\}$ be the explanation with selected nodes $V_s$ and selected features $F_s$.

Let $M(\mathcal{S})$, or briefly $M$, be the mask matrix such that each element $M_{i,j}=1$ if and only if $i$th node (in $G(n)$) and $j$th feature are included in sets $V_s$ and $F_s$ respectively and $0$ otherwise.
Then the perturbed input is given by
\begin{equation}
 Y_{\mathcal{S}} = X\odot M(\mathcal{S}) + Z\odot(\mathbbm{1} - M(\mathcal{S})), Z\sim \mathcal{N},
    \label{eq:noisy_features_matrix}
\end{equation}
where $\odot$ denotes an element-wise multiplication, and $\mathbbm{1}$ a matrix of ones with the corresponding size. 

\begin{definition}[\fidelity{}]
The \textit{\fidelity{}} of explanation $\mathcal{S}$ with respect to the GNN $\Phi$ and the noise distribution $\mathcal{N}$ is given by
\begin{equation}
\label{eq:fidelity}
\mathcal{F}(\mathcal{S}) = \mathbb{E}_{Y_{\mathcal{S}}|Z\sim \mathcal{N}} \left[\mathbbm{1}_{\Phi\left(X\right)=\Phi(Y_{\mathcal{S}})}\right].
\end{equation}
In simple words, \fidelity{} is computed as the expected validity of the perturbed input $Y_\mathcal{S}$. 
\label{def:fidelity}
\end{definition}
Note that high \fidelity{} explanations would be stable by definition, i.e., their validity score would not vary significantly across different realizations of $\mathcal{S}^c$.

\begin{theorem}
\label{thm:fidelity}
An explanation with \fidelity{} $p$ has stability value of ${1\over 1+  p(1-p)}$.
\end{theorem}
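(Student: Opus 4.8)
The plan is to recognize that the quantity appearing in the stability definition and the quantity averaged in the fidelity definition are one and the same Bernoulli random variable, after which the theorem reduces to the elementary variance formula for a Bernoulli law. The only conceptual work is the identification of the two sources of randomness; everything else is a one-line computation.

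First I would fix the explanation $\mathcal{S}$ and observe that, for each draw $Z \sim \mathcal{N}$ filling in the complement $\mathcal{S}^c$, the validity score $\mathbbm{1}_{\Phi(X)=\Phi(Y_{\mathcal{S}})}$ takes values in $\{0,1\}$. Thus the random variable $\mathcal{Y}$ from the stability definition, sampled from the distribution of validity scores over realizations of $X\setminus\mathcal{S}$, is an indicator random variable, i.e.\ $\mathcal{Y}\sim\operatorname{Bernoulli}(q)$ for some parameter $q\in[0,1]$. The key identification is that the realizations of $X\setminus\mathcal{S}$ underlying $\mathcal{Y}$ coincide exactly with the draws $Z\sim\mathcal{N}$ of the noisy entries in $Y_{\mathcal{S}}$ defined by Eq.~(\ref{eq:noisy_features_matrix}); both quantities are driven by the same randomness.

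Next I would compute the Bernoulli parameter. Since $\mathcal{Y}$ is a $\{0,1\}$-valued indicator, its mean equals the probability that the prediction is preserved, which by Definition~\ref{def:fidelity} is precisely the fidelity:
\begin{equation*}
\mathbb{E}[\mathcal{Y}] = \mathbb{E}_{Y_{\mathcal{S}}|Z\sim\mathcal{N}}\left[\mathbbm{1}_{\Phi(X)=\Phi(Y_{\mathcal{S}})}\right] = \mathcal{F}(\mathcal{S}) = p.
\end{equation*}
Hence $\mathcal{Y}\sim\operatorname{Bernoulli}(p)$, and the standard variance of a Bernoulli law gives $\operatorname{Var}(\mathcal{Y}) = p(1-p)$.

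Finally I would substitute this variance directly into the stability definition $\gamma(\mathcal{S})=\tfrac{1}{1+\operatorname{Var}(\mathcal{Y})}$ to obtain $\gamma(\mathcal{S}) = \tfrac{1}{1+p(1-p)}$, which is the claimed value. I do not anticipate any genuine obstacle here: the result is a direct consequence of the definitions, and the entire content lies in noticing that the fidelity $p$ is exactly the mean of the Bernoulli validity indicator whose variance enters the stability formula. The only point requiring care is to state explicitly that $\mathcal{Y}$ and the integrand in $\mathcal{F}(\mathcal{S})$ are the same random variable, so that $\mathbb{E}[\mathcal{Y}]=p$ is a definitional identity rather than an approximation.
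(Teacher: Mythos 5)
Your proof is correct and follows essentially the same route as the paper's: identify the validity score $\mathcal{Y}$ as a Bernoulli random variable whose mean is, by definition, the \fidelity{} $p$, note $\operatorname{Var}(\mathcal{Y})=p(1-p)$, and substitute into the stability definition. Your explicit remark that the randomness over realizations of $X\setminus\mathcal{S}$ and the draws $Z\sim\mathcal{N}$ in Eq.~(\ref{eq:noisy_features_matrix}) coincide is a slightly more careful statement of an identification the paper makes implicitly, but it does not change the argument.
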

\begin{proof}
Let $\mathcal{Y}$ be the random variable corresponding to validity score for an explanation $\mathcal{S}$. Note that 
$$\mathbb{E}(\mathcal{Y}) = \mathbb{E}_{Y_{\mathcal{S}}|Z\sim \mathcal{N}} \left[\mathbbm{1}_{\Phi\left(X\right)=\Phi(Y_{\mathcal{S}})}\right]= p$$
where $Y_S$ and $Z$ are as defined in \eqref{eq:noisy_features_matrix}.

Note that $\mathcal{Y}$ can be understood  as a sample drawn from a Bernoulli distribution with a mean equal to \fidelity{} value, i.e., 
$\mathcal{Y} \sim \operatorname{Ber}(p)$
The variance of a Bernoulli distributed variable $\mathcal{Y}$ is given by $ p(1-p)$. The proof is completed then by substituting the variance in the definition of stability.
\end{proof}
Theorem~\ref{thm:fidelity} implies that for \fidelity{} greater than $0.5$, the stability increases with an increase in \fidelity{} and achieves a maximum value of $1$ when \fidelity{} reaches its maximum value of $1$. Therefore, to ensure high stability, it suffices to find high \fidelity{} explanations. As stability is theoretically bounded with respect to \fidelity{}, we do not additionally report stability in our experiments. %

\section{Maximizing \fidelity{}}
We propose a simple but effective greedy combinatorial approach, which we call \approach{}, to find high \fidelity{} explanations.  
By fixing the \fidelity{} to a certain user-defined threshold, say $\tau$, we are interested in the sparsest explanation, which has a \fidelity{} of at least $\tau$.

The pseudocode is provided in Algorithm~\ref{alg:recursive}. 
Let for any node $n$, $V_n$ denote the vertices in its computational graph $G(n)$, i.e., the set of vertices in $L$-hop neighborhood of node $n$ for an L-layer GNN; and $F$ denote the complete set of features. 
We start with zero-sized explanations and select as first element 
\begin{equation}
\label{eq:init}
\underset{f\in F}{\mathrm{argmax}} ~~\mathcal{F}(V_n, \{f\}) \quad \text{or} \quad \underset{v\in V_n}{\mathrm{argmax}} ~~\mathcal{F}(\{v\}, F),
\end{equation}
whichever yields the highest \fidelity{} value. We iteratively add new features or nodes to the explanation such that the \fidelity{} is maximized over all evaluated choices. 
Let $V_p$ and $F_p$ respectively denote the set of possible candidate nodes and features that can be included in an explanation at any iteration. 
We save for each possible node $v\in V_p$ and feature $f\in F_p$ the ordering $R_{V_p}$ and $R_{F_p}$ given by the \fidelity{} values $\mathcal{F}(\{v\}, F_p)$ and  $\mathcal{F}(V_p, \{f\})$ respectively.
To reduce the computational cost, we only evaluate each iteration the top $K$ remaining nodes and features determined by $R_{V_p}$ and $R_{F_p}$.

\begin{algorithm}[h]
    \caption{\approach{}$(n, \tau)$}
\label{alg:recursive}
\textbf{Input:} node $n$, threshold $\tau$ \\
\textbf{Output:} explanation, i.e. node mask $V_s$ \& feature mask $F_s$
    \begin{algorithmic}[1]
\State $V_n\leftarrow$ set of vertices in $G(n)$
\State $F_p \leftarrow$ set of node features
\State $V_r=V_p$, $F_r=F_p$, $V_s=\emptyset$, $F_s=\emptyset$
\State {\small{$R_{V_p}\leftarrow $ list of $v\in V_p$ sorted by $\mathcal{F}(\{v\}, F_p)$ }} \label{alg:init_1}
\State $R_{F_p}\leftarrow$ list of $f\in F_p$ sorted by $\mathcal{F}(V_p,\{f\})$ \label{alg:init_2}
\State Add maximal element to $V_s$ or $F_s$ as in (\ref{eq:init})
\While{$\mathcal{F}(V_s,F_s) \le \tau $}
\State $\Tilde{V}_s = V_s \cup \underset{v\in \operatorname{top}_K(V_r)}{\mathrm{argmax}} \mathcal{F}(\{v\}\cup V_s, F_s)$
\State $\Tilde{F}_s= F_s \cup \underset{f\in \operatorname{top}_K(F_r)}{\mathrm{argmax}} \! \mathcal{F}(V_s, \{f\}\cup F_s)$
\If{$\mathcal{F}(\Tilde{V}_s,F_s) \le \mathcal{F}(V_s,\Tilde{F}_s)$}
\State $F_r=F_r\setminus \{f\}$, $F_s=\Tilde{F}_s$
\Else
\State $V_r=V_r\setminus \{v\}$, $V_s= \Tilde{V_s}$
\EndIf
\EndWhile
\State \textbf{return} $\{V_s, F_s\}$ 
\end{algorithmic}
\end{algorithm}

As shown in Fig.~\ref{fig:toy_example}, an instance can have multiple valid, sparse, and stable explanations.
Therefore, we also propose a variant of \approach{}, which continues searching for further explanations:
Once we found an explanation with the desired \fidelity{}, we discard the chosen elements from the feature matrix $X$, i.e., we never consider them again as possible choices in computing the next explanation. 
We repeat the process by finding relevant selections disjoint from the ones already found. 
To ensure that disjoint elements of the feature matrix $X$ are selected, we recursively call Algorithm~\ref{alg:recursive} with either remaining (not yet selected in any explanation) set of nodes or features.
Finally, we return the set of explanations such that the \fidelity{} of $\tau$ cannot be reached by using all the remaining components that are not in any explanation.
For a detailed explanation of the details and the reasoning behind various design choices, we refer to Appendix~\ref{sec:algorithm_details}.

The pseudocode to compute \fidelity{} is provided in Algorithm~\ref{alg:fid}. Specifically we generate the obfuscated instance for a given explanation $\mathcal{S}=\{V_s,F_s\}$, $Y_{\mathcal{S}}$ by setting the feature values for selected node-set $V_s$ corresponding to selected features in $F_s$ to their true values. 
To set the irrelevant values, we randomly choose a value from the set of all possible values for that particular feature in the dataset $\mathcal{X}$. 
To approximate the expected value in Eq.~(\ref{eq:fidelity}), we generate a finite number of samples of $Y_{\mathcal{S}}$.
We then compute \fidelity{} as average validity with respect to these different baselines.

\begin{algorithm}[h]
   \caption{$\mathcal{F}(V_s,F_s)$}
\label{alg:fid}
\textbf{Input:} node mask $V_s$, feature mask $F_s$ \\
\textbf{Output:} \fidelity{} for the given masks
\begin{algorithmic}[1]
\For{$i=0, \ldots, \text{samples}$} 
\State Set $Y_{\{V_s, F_s\}}$, i.e.\ fix the selected values and otherwise retrieve random values from the respective columns of $\mathcal{X}$
\If{$\Phi(Y_{\{V_s, F_s\}})$ matches the original prediction of the model}
\State correct$+=1$
\EndIf
\EndFor
\State \textbf{return} $\frac{\text{correct}}{\text{samples}}$
\end{algorithmic}
\end{algorithm}

\begin{theorem}\label{thm:zorro}
\approach{} has the following properties.
\begin{enumerate}
    \item \approach{} retrieves explanation with at least \fidelity{} $\tau$: %
    \[
    \mathcal{F}(V_s, F_s) \ge \tau.
    \]
    \item The runtime of \approach{} is independent of the size of the graph. The runtime complexity of \approach{} for retrieving an explanation is given by $$O(t\cdot\max(|V_n|,|F|)),$$
    where $t$ is the run time of the forward pass of the GNN $\Phi$.
    \item For any retrieved explanation $\mathcal{S}$ and $\tau\ge 0.5$, the stability score is
    $\gamma(\mathcal{S})\ge {1\over 1 +\tau(1-\tau)}$.
    
\end{enumerate}
\end{theorem}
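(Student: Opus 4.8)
The plan is to dispatch the three claims separately, since each rests on a single structural feature of Algorithm~\ref{alg:recursive} combined with a result already in hand.

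For claim~(1), I would argue directly from the loop guard. The \texttt{while} loop iterates precisely while $\mathcal{F}(V_s,F_s)\le\tau$, so on termination we must have $\mathcal{F}(V_s,F_s)>\tau\ge\tau$, which is the desired bound. The only point requiring care is that the loop does terminate. I would note that each pass through lines~8--14 commits exactly one previously unselected node or feature to the explanation, so after at most $|V_n|+|F|$ iterations every component is selected. In that degenerate case the mask $M(\mathcal{S})$ is all-ones, so by Eq.~(\ref{eq:noisy_features_matrix}) we get $Y_{\mathcal{S}}=X$, forcing $\Phi(Y_{\mathcal{S}})=\Phi(X)$ and hence $\mathcal{F}=1\ge\tau$. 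Thus the guard must fail at or before that point, yielding $\mathcal{F}(V_s,F_s)\ge\tau$.

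For claim~(2), I would simply count work. There are at most $|V_n|+|F|$ iterations by the argument above. Inside each iteration we evaluate $\mathcal{F}$ on the $\operatorname{top}_K$ candidate nodes and features; treating $K$ and the fixed sample count of Algorithm~\ref{alg:fid} as constants, each such evaluation costs a constant number of forward passes, i.e.\ $O(t)$, so each iteration is $O(t)$. The initialization on lines~\ref{alg:init_1}--\ref{alg:init_2} evaluates $\mathcal{F}$ once per candidate, costing $O(t\cdot(|V_n|+|F|))$, which dominates the sorting cost. Summing gives $O(t\cdot(|V_n|+|F|))=O(t\cdot\max(|V_n|,|F|))$. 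The qualitative point I would stress is that every quantity here—$|V_n|$, $|F|$, and the forward-pass cost $t$—is a property of the computational graph $G(n)$ and the GNN, and $|V_n|$ is the size of the $L$-hop neighborhood rather than of the whole graph; hence the bound is independent of the global graph size.

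Claim~(3) is then a short corollary of Theorem~\ref{thm:fidelity} and claim~(1). Writing $p=\mathcal{F}(V_s,F_s)$, claim~(1) gives $p\ge\tau$, and Theorem~\ref{thm:fidelity} gives $\gamma(\mathcal{S})=\frac{1}{1+p(1-p)}$. The final step is the observation that $h(p)=p(1-p)$ is strictly decreasing on $[\tfrac12,1]$, so $p\ge\tau\ge\tfrac12$ forces $p(1-p)\le\tau(1-\tau)$ and therefore $\gamma(\mathcal{S})=\frac{1}{1+p(1-p)}\ge\frac{1}{1+\tau(1-\tau)}$. None of the three parts presents a genuine obstacle; the only places needing attention are the termination argument in claim~(1), where ruling out an infinite loop relies on the all-ones-mask case, and the direction of the inequality in claim~(3), where the hypothesis $\tau\ge0.5$ is exactly what places $p$ on the decreasing branch of the parabola $p(1-p)$.
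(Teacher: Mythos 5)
Your proof is correct and follows essentially the same route as the paper's: part~(1) via the loop guard together with the fact that the trivial all-selected explanation has \fidelity{} $1$, part~(2) by bounding the iteration count by $|V_n|+|F|$ with $O(t)$ work per iteration, and part~(3) by combining part~(1) with Theorem~\ref{thm:fidelity} and the monotonicity of $\tau(1-\tau)$ on $[\tfrac12,1]$. If anything, your write-up is slightly more careful than the paper's, since you make the termination argument explicit and account for the initialization cost of lines~\ref{alg:init_1}--\ref{alg:init_2}, which the paper's proof leaves implicit.
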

For the proof and the discussion on the choice of noise distribution , we refer to appendix~\ref{sec:proof_thm_zorro}.

\section{Experimental Setup}
\label{sec:setup}

The evaluation of post-hoc explanation techniques has always been tricky due to the lack of ground truth.
Specifically, for a model prediction, collecting the ground truth explanation is akin to asking the trained model what it was thinking about -- an impossibility and hence a dilemma.
There is no clear solution to the ground-truth dilemma. However, previous research has attempted varying experimental regimes, each with its simplifying assumptions. 
We conduct a comprehensive set of experiments adopting the three dominant existing experimental regimes from the literature --   \textit{real-world graphs with unknown ground truth, remove and retrain, and synthetic graphs with known ground truth}. 
Later on, we will reflect on the limitations of their assumptions and the threats they might pose to our results' validity.

\subsection{Evaluation without Ground Truth}
\label{sec:evalwithoutGT}

In the absence of ground truth for explanations, we can still evaluate posthoc explanations using the desirable properties of the explanations introduced by us, i.e., sparsity, stability (quantified via \fidelity{}), and validity:

\begin{RQ}
\label{rq:fidelity_stability_sparsity}
How effective is \approach{} as compared to existing methods in terms of sparsity, \fidelity{}, and validity? 
\end{RQ}
Note that these metrics are not always correlated.  For example, an explanation can have a high validity score but low stability or \fidelity{}. In the following, we describe the real-world datasets that we use to compare explanations.

\mpara{Datasets and GNN Models.} We use the most commonly used datasets \textbf{Cora}, \textbf{CiteSeer} and \textbf{PubMed}~\citep{yang2016revisiting}.
We evaluate our approach on four different two-layer graph neural networks: \textbf{GCN}, graph attention network (\textbf{GAT})~\citep{velickovic2018graph}, the approximation of personalized propagation of neural predictions (\textbf{APPNP})~\citep{klicpera2018predict}, and graph isomorphism network (\textbf{GIN})~\citep{xu2018how}.
We evaluate these combinations with respect to \textit{validity}, \textit{sparsity}, and \textit{\fidelity{}} for $300$ randomly selected query nodes. 
To calculate node sparsity for those approaches which retrieve soft edge masks, such as \gnnexp{}, we follow~\cite{sanchez2020evaluating} and create node masks by distributing the edge mask value equally onto the endpoint of the respective edges. For example, if a particular edge $(u,v)$ the corresponding edge mask has a value of 0.5, then nodes $u$ and $v$ would be given a node mask of $0.25$ each.

In Appendix~\ref{sec:further_experiments}, we provide additional experimental results in which we investigate: i) the effect of the number of samples used for calculating the \fidelity{} in \approach{}, ii) further variations of the \fidelity{} threshold, iii) explanations using four additional metrics proposed by \citep{yuan2020explainability}, and iv) the impact of larger computational graphs on explanation approaches by using the Amazon Computers dataset~\citep{shchur2018pitfalls}. 

\subsection{Remove and Retrain}
\label{sec:roar}  

In this experimental regime, we follow the remove-and-retrain (or ROAR) paradigm of evaluating explanations~\cite{hooker2019benchmark} that is based on retraining a neural network based on the explanation outputs. 
ROAR removes the fraction of input features deemed to be the most important according to each explainer and measures the change to the model accuracy upon retraining. 
Thus, the most accurate explainer will identify inputs as necessary whose removal causes the most damage to model performance relative to all other explainers.
Note that, unlike the other evaluation schemes, firstly, ROAR is a global approach in that it forces a fixed set of features to be removed.
Secondly, ROAR involves retraining the model, whereas other approaches have interventions purely on the outputs of the trained model.

\begin{RQ}
\label{rq:globalrelevence}
How effective is \approach{} when its output explanations are used for retraining a new GNN model?
\end{RQ}

\subsection{Evaluation with Ground Truth}

Although it is hard to obtain ground-truth data from real-world datasets, previous works have constructed synthetic datasets with known subgraph structures that GNN models learn to predict the output label~\cite{ying2019:gnnexplainer}.
We consider the only synthetic dataset proposed in~\cite{ying2019:gnnexplainer} having features called BA-Community. 
First, we create a community with a base Barab\'{a}si-Albert (BA) graph and attach a five-node house graph to randomly selected nodes of the base graph. 
Nodes are assigned to one of the eight classes based on their structural roles and community memberships. For example, there are three functions in a house-structured motif: the house's top, middle, and bottom nodes. Following~\cite{ying2019:gnnexplainer}, only the class assignments of the house nodes have to be explained, and the respective house is regarded as ``explanation ground truth''.

\mpara{Node features for synthetic graphs.} Nodes have normally distributed feature vectors.
Each node has eight feature values drawn from $\mathcal{N}(0,1)$ and two features drawn from $\mathcal{N}(-1,.5)$ for nodes of the first community or $\mathcal{N}(1,.5)$ otherwise.  The feature values are normalized within each community, and within each community, $0.01\ \%$ of the edges are randomly perturbed. 
Note that for reproducibility, we strictly follow the published implementation of GNNExplainer.
For the known ground-truth regime, we are interested in answering the following research question:

\begin{RQ}
\label{rq:groundtrutheval}
Are Zorro's explanations \textbf{accurate}, \textbf{precise} and \textbf{faithful} to available ground truth explanations?
\end{RQ}

\subsubsection{Metrics} 

To compare against the known ground truth, we use various metrics proposed earlier in literature for synthetic datasets -- \textit{accuracy, precision, faithfulness}.

\mpara{Accuracy} measures the fraction of correctly classified nodes in the explanation.
Note that only reporting accuracy as a metric does not portray the complete picture. For example, in our imbalanced dataset of five positive nodes (the house motif), out of 100 other nodes in the computational graph, high accuracy can be achieved by a trivial selection of five neighbors or sometimes even none.
Therefore, we also report the {precision} value, which emphasizes the fraction of correct predictions:

\mpara{Precision} is defined as the fraction of returned nodes that are also in the explanation set. Precision provides more reliable results than accuracy when the input is much larger than the explanation. To compute accuracy and precision for baselines, we transform the baselines' results into a node mask of the five most important nodes, which is the size of the explanation ground truth. 

\mpara{Faithfulness.} Faithfulness is based on the assumption that a more accurate GNN leads to more precise explanations~\cite{sanchez2020evaluating}. 
We measure faithfulness by comparing the explanation performance of the GNN model at an intermediate training epoch against the fully trained GNN (final model trained until convergence).
Specifically, we generate two ranked lists corresponding to test accuracy and precision of retrieved explanations at different epochs. We then compute faithfulness as the rank correlation of these two lists measured using Kendall's tau $\tau_{\text{Kendall}}$.

\subsection{Baselines and Competitors}

For a comprehensive quantitative evaluation we chose our baselines from the three different categories of post-hoc explanations models consisting of (i) soft-masking approaches like
\textbf{\gnnexp{}}, which returns a continuous feature and edge mask and \textbf{PGE~\cite{luo2020parameterized}} learns soft masks over edges in the graph (ii) surrogate model based hard-masking approach, \textbf{PGM}~\cite{vu2020pgm}, which returns a binary node mask, iii) Shapely based hard masking approach SubgraphX~\cite{yuan2021explainability}, which returns a subgraph as an explanation, and (iv) gradient-based methods \textbf{Grad \& GradInput}~\cite{shrikumar2017:deeplift} which utilize gradients to compute feature attributions. Specifically, we take the gradient of the rows and columns of the input feature matrix $X$, which corresponds to the features' and nodes' importance. 
For GradInput, we also multiply the result element-wise with the input. In the case of PGM, we use the author's default settings to choose the best node mask.
Besides, we employ an \textbf{empty explanation} as the naive baseline.
We could only run SubgraphX for the small synthetic dataset, due to its long runtime (see Appendix~\ref{sec:experiments}).

\mpara{\approach{} Variants.} For our approach \approach{}, we retrieved explanations for the thresholds $\tau=.85$ and $\tau=.98$ with $K=10$. All \fidelity{} values were calculated based on $100$ samples.

We refer to Appendix~\ref{sec:experiments} and the available implementation for further details of the models and the training of the GNNs. %

\settowidth\rotheadsize{Sparsity)} 
\begin{table*}[htbp]
    \centering
    \caption{Analysis of the average \textit{sparsity (Definition~\ref{def:sparsity}), \fidelity{} (Definition~\ref{def:fidelity}),} and \textit{validity} (Definition~\ref{def:validity}) of the explanations. 
    The smaller the explanation size larger is the sparsity. 
    As stability can be directly derived from \fidelity{} and increases with \fidelity{} $>0.5$ (see Theorem~\ref{thm:fidelity}), it suffices to compare \fidelity{} to ensure stability. PGM and PGE are not included in the feature sparsity because they don't retrieve feature masks. 
    }
    {\small
    \setlength{\tabcolsep}{3.5pt}
    \begin{NiceTabular}{ll ccc ccc ccc ccc}
\toprule
\multirow{2}{*}{\textbf{Metric}} & \multirow{2}{*}{\textbf{Method}} & \multicolumn{4}{c}{\textbf{Cora}} & \multicolumn{4}{c}{\textbf{CiteSeer}} & \multicolumn{4}{c}{\textbf{PubMed}} \\
 &  &   GCN &   GAT &   GIN & APPNP &      GCN &   GAT &   GIN & APPNP &    GCN &   GAT &   GIN & APPNP \\
\midrule
 \multirow{5}{*}{ \rotcell{Features-Sparsity}}& \gnnexp{}   &  7.27 &  7.27 &  7.27 &  7.27 &     8.21 &  8.21 &  8.21 &  8.21 &   6.21 &  6.21 &  6.21 &  6.21 \\
 & Grad           &  4.08 &  4.22 &  4.45 &  4.08 &     4.19 &  4.28 &  4.41 &  4.18 &   4.41 &  4.51 &  4.89 &  4.46 \\
 & GradInput      &  4.07 &  4.25 &  4.37 &  4.08 &     4.17 &  4.29 &  4.33 &  4.17 &   4.41 &  4.51 &  4.92 &  4.47 \\
 & \approach{}  $(\tau = .85)$        &  \bf{1.91} &  \bf{2.29} &  \bf{3.51} &  \bf{2.26} &     \bf{1.81} &  \bf{1.84} &  \bf{3.67} &  \bf{1.97} &   \bf{1.60} &  \bf{1.52} &  \bf{2.38} &  \bf{1.75} \\
 & \approach{}  $(\tau = .98)$      &  2.69 &  3.07 &  4.34 &  3.18 &     2.58 &  2.60 &  4.68 &  2.78 &   2.55 &  2.58 &  3.21 &  2.86 \\
 \midrule
 \multirow{7}{*}{ \rotcell{Node-Sparsity}}& \gnnexp{}   &   2.48 &  2.49 &  2.56 &  2.51 &     1.67 &  1.67 &  1.70 &  1.68 &    2.7 &  2.71 &  2.71 &  2.71 \\
 & PGM    &  2.06 & 1.82 &  \bf{1.66} & 1.99 & 1.47 &  1.59 & \bf{1.10} & 1.54 & 1.64 &  1.16 & \bf{1.62} &  2.93 \\
 & PGE & 1.86 & 1.86 & 1.78 & 1.94 &1.48 & 1.40& 1.36 & 1.41 & 1.91 &  1.81 & 1.85 & 1.92  \\ 
 & Grad            &  2.48 &  2.34 &  2.25 &  2.35 &     1.70 &  1.61 &  1.55 &  1.60 &   2.91 &  2.76 &  3.11 &  2.73 \\
 & GradInput        &  2.53 &  2.43 &  2.23 &  2.41 &     1.61 &  1.58 &  1.54 &  1.52 &   3.02 &  2.94 &  3.41 &  2.81 \\

 & \approach{}  $(\tau = .85)$   &  \bf{1.28} &  \bf{1.30} &  1.90 &  \bf{1.16} &     \bf{1.05} &  \bf{0.92} &  1.36 &  \bf{0.83} &   \bf{1.07} &  \bf{0.87} &  1.77 &  \bf{0.79} \\
 & \approach{}  $(\tau = .98)$  &  1.58 &  1.59 &  2.17 &  1.48 &     1.26 &  1.09 &  1.58 &  1.07 &   1.51 &  1.31 &  2.18 &  1.25 \\
\midrule
 \multirow{7}{*}{ \rotcell{\fidelity{}}}& \gnnexp{}   &  0.71 &  0.66 &  0.52 &  0.65 &     0.68 &  0.69 &  0.51 &  0.62 &   0.67 &  0.73 &  0.67 &  0.72 \\
 & PGM  & 0.84 &  0.77 &  0.60 &  0.89 & 0.92 & 0.93 &  0.73 &  0.95 &   0.78 &  0.69 &  0.74 &  0.96 \\ 
 & PGE & 0.50 & 0.53 & 0.35 & 0.49 &0.64 &0.60 & 0.51 & 0.61 & 0.49 & 0.61  & 0.56 & 0.50  \\ 
 & Grad           &  0.15 &  0.18 &  0.19 &  0.17 &     0.17 &  0.19 &  0.28 &  0.18 &   0.37 &  0.43 &  0.42 &  0.37 \\
 & GradInput      &  0.15 &  0.18 &  0.18 &  0.16 &     0.16 &  0.18 &  0.26 &  0.17 &   0.36 &  0.42 &  0.42 &  0.36 \\
 & Empty Explanation      &  0.15 &  0.18 &  0.18 &  0.16 &     0.16 &  0.18 &  0.26 &  0.17 &   0.36 &  0.42 &  0.42 &  0.36 \\
  & \approach{}  $(\tau = .85)$       &  0.87 &  0.88 &  0.86 &  0.88 &     0.87 &  0.86 &  0.87 &  0.86 &   0.86 &  0.88 &  0.88 &  0.87 \\
 & \approach{}  $(\tau = .98)$    &  \bf{0.97} &  \bf{0.97} &  \bf{0.96} &  \bf{0.97} &     \bf{0.97} &  \bf{0.97} &  \bf{0.97} &  \bf{0.96} &   \bf{0.96} &  \bf{0.97} &  \bf{0.97} & \bf{0.96} \\
 \midrule
\multirow{7}{*}{ \rotcell{Validity}} & \gnnexp{}   &  0.89 &  0.95 &  0.83 &  0.84 &     0.87 &  0.92 &  0.58 &  0.93 &   0.60 &  0.81 &  0.71 &  0.87 \\
 & PGM & 0.89 &  0.90 &  0.64 &  0.94 & 0.95 & 0.95 &  0.76 &  0.97 &   0.86 &  0.80 &  0.62 &  0.97 \\
 & PGE & 0.51 & 0.54 & 0.34 &0.45  &0.62 & 0.59& 0.54  & 0.62 & 0.51 &  0.61 &0.57  & 0.48  \\ 
 & Grad           &  0.26 &  0.25 &  0.15 &  0.18 &     0.28 &  0.25 &  0.12 &  0.26 &   0.36 &  0.49 &  0.50 &  0.38 \\
 & GradInput      &  0.22 &  0.22 &  0.12 &  0.17 &     0.18 &  0.16 &  0.08 &  0.19 &   0.36 &  0.49 &  0.50 &  0.37 \\
 & Empty Explanation      &  0.22 &  0.22 &  0.11 &  0.17 &     0.18 &  0.16 &  0.08 &  0.19 &   0.36 &  0.49 &  0.50 &  0.37 \\
 & \approach{}  $(\tau = .85)$          &  \bf{1.00} &  \bf{1.00} &  0.83 &  \bf{1.00} &     \bf{1.00} &  \bf{1.00} &  0.77 &  \bf{1.00} &   0.90 &  \bf{1.00} &  0.84 &  \bf{1.00} \\
 & \approach{}  $(\tau = .98)$       &  \bf{1.00} &  \bf{1.00} &  \bf{0.90} &  \bf{1.00} &     \bf{1.00} &  \bf{1.00} &  \bf{0.91} &  \bf{1.00} &   \bf{0.98} &  \bf{1.00} &  \bf{0.87} &  \bf{1.00} \\
\bottomrule
\end{NiceTabular}
    }
    \label{tab:real_main}
\end{table*}

\section{Experimental Results}
\label{sec:results}

In presenting our experimental results, we begin with RQ~\ref{rq:fidelity_stability_sparsity} that relates to the regime where we consider real-world datasets but without ground-truth explanations in Section~\ref{sec:realeval}. 
Continuing with the real-world datasets, we will discuss the global impact of explanation approaches when GNN models are retrained based on the explanations in Section~\ref{rq:globalrelevence}.
To our knowledge, we are the first to evaluate GNN-explanation approaches in the retraining setup.
Finally, in Section~\ref{sec:synthetic-data}, we will check the effectiveness of \approach{} on synthetic datasets where ground-truth for explanations is known.

\subsection{Evaluation with Real-World Data}
\label{sec:realeval}

To answer RQ~\ref{rq:fidelity_stability_sparsity}, we evaluate \approach{}'s performance on three standard real-world datasets -- \textbf{Cora}, \textbf{CiteSeer} and \textbf{PubMed}.
As discussed in the last section, real-world datasets do not have accompanying ground-truth explanations.
Instead, the results of our experiments are summarized in Table~\ref{tab:real_main} where we compare the performance of various explanation methods in terms of validity, sparsity, and \fidelity{}.

\mpara{Validity and \fidelity{}.} 
We re-iterate that \fidelity{} measures the stability of the explanations. 
The validity, on the other hand, measures if the explanation alone retains the same class predictions.
We first observe that gradient-based approaches obtain low \fidelity{} and validity compared to other soft-masking baselines like PGM and \gnnexp{}.
We observe that even the empty baseline achieves validity in the same range of $0.11 - 0.50$ as gradient-based methods. 
Hence, selecting no nodes and no features in the explanation, as done in the empty explanation baseline, yields similar performance as the gradient-based explanations. 
This result also establishes the superiority of GNN-specific explanation methods as PGM and \gnnexp{}.
Interestingly, while \gnnexp{} outperforms PGM for Cora in terms of validity, PGM finds overall more stable explanations and shows higher \fidelity{}. On the other side, PGE performs worst out of all masks-based explainers.

Since \approach{} optimizes for \fidelity{}, we expectantly deliver high performance for \fidelity{}. 
However, \approach{} also convincingly outperforms all the existing baseline approaches for validity even if it is not explicitly optimized for validity.
Additionally, a significant result here is that our heuristic yet efficient greedy procedure is already sufficient to produce near-optimal validity and \fidelity{} values.

\mpara{Node and Feature Sparsity.} 
Note that we differentiate between node and feature sparsity because explanation methods like PGM do not produce feature attributions.
Moreover, we report the sparsity as the effective explanation size that is the entropy of the retrieved masks. The larger the explanation size, the lower will be the sparsity.
First, we compare soft-masking approaches, i.e., gradient-based approaches, PGE, and \gnnexp{}.
We observe that the feature sparsity of \gnnexp{}, somewhat surprisingly, is less sparse than even gradient-based approaches. Since PGE and \gnnexp{} return soft edge masks, we compute the corresponding node mask as a sum of the masks of the edges which contain the corresponding node.
In terms of node sparsity, PGE outperforms all other soft-mask-based approaches. As PGE does not produce a feature mask, in other words, it selects all features, feature sparsity is not provided.
A low sparsity for soft-masking approaches implies a near-uniform feature attribution and consequently lower interpretability. 
On the other hand, explanations produced by \approach{} and PGM are boolean masks or hard masks.
Since PGM only retrieves node masks, only a comparison based on node sparsity is possible. 
PGM outperforms with respect to node sparsity all soft masking approaches. 
However, for all cases, but GIN, \approach{} retrieves even sparser node masks.

We see that \approach{} produces significantly sparser explanations in comparison to soft-masking approaches. 
Between the variants of \approach{}, the explanations of \approach{} for $\tau=0.85$ are expectedly lower in sparsity than for $\tau=0.98$ that is a more constrained version of \approach{}. 
However, note that a lower sparsity comes with an advantage of higher \fidelity{} and validity.

\mpara{Key Takeaways.} Our crucial takeaway from this experiment is that \approach{} convincingly outperforms all other explanation methods across all datasets and GNN models.
To answer RQ~\ref{rq:fidelity_stability_sparsity} quantitatively, we report the average improvement of \approach{} $(\tau=0.98)$ (with respect to the best performing baseline for each metric, model, and dataset): \approach{} achieves a reduction of $72\%$ and $94\%$ in the effective node, respectively, feature explanation size; increase by $20\%$ in \fidelity{} and $11\%$ in validity of explanations.

\subsection{Evaluation with Remove and Retraining}
\label{sec:roar-results}
We now present the results that estimate the global relevance of explanations by adapting the ROAR technique as already described earlier in Section \ref{sec:evalwithoutGT}.
In our setup, given (a) a training set, (b) a GNN model that we want to explain and, (c) an explanation method, we retrieved explanations for each node in the training set.
Next, we sum all feature masks corresponding to the retrieved explanations (of all training nodes) and choose the top-$k$ features based on the aggregated value. 
For hard masks, this procedure is equivalent to selecting the top-$k$ most frequently retrieved features. 
Finally, we retrain the GNN model again on the same training set but with the selected top-$k$ features.
Figure~\ref{fig:retraining} reports the performance drop in the model's test accuracy after the remove-and-retrain procedure.
Note that the fewer features are needed to achieve similar test accuracy, the better the explanations' quality.  

We report our results for $k\in\{1, 5, 10, 50, 100\}$ most essential features using GCN as the GNN and over the CORA dataset. 
First, we observe that using only the top-$10$ important features using \approach{} ($\tau=.98$) already achieves a test accuracy of $0.72$ compared to $0.79$ on all $1433$ features. 
Selecting $100$ features however using \approach{} ($\tau=.98$), causes only a minor performance drop of $\Delta<0.01$. 
Similar to Table~\ref{tab:real_main}, Grad achieves slightly better results than GradInput. 
Interestingly, \gnnexp{} performs poorly, and the possible reason for this is its non-sparse feature masks (as seen in the previous section).
Since PGM does not retrieve feature masks, it could not be evaluated in this setting. 

To answer RQ~\ref{rq:globalrelevence}, we find that \approach{} effectively chooses good global features, aggregated from \approach{}'s local explanation, in comparison to other explanation approaches.
Surprisingly, the gradient-based methods outperform GNN-specific \gnnexp{} approaches.
This is possible because we are experimenting with feature masks (and not node masks) and that gradient-based approaches are optimized for non-relational models.

\begin{figure}
    \centering
    \includegraphics[width=0.8\linewidth]{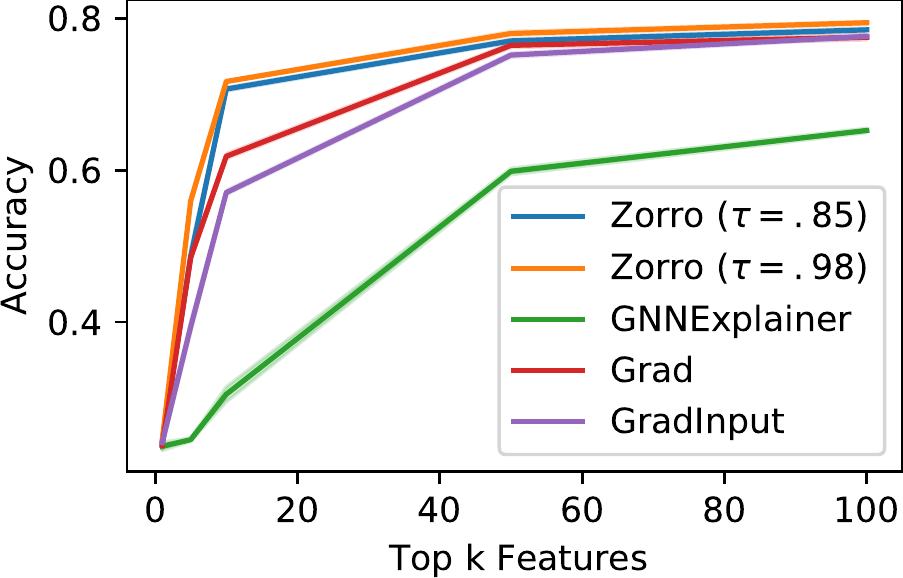}
    \caption{Test accuracy after retraining GCN on Cora based on the top k features. We repeated the retraining 20 times, report the mean, and observed a variation of below $.001$.} %
    \label{fig:retraining}
\end{figure}
\subsection{Evaluation with Ground-Truth}
\label{sec:synthetic-data}

\begin{table}[htbp]
 \setlength{\tabcolsep}{3.5pt}
    \centering
    \caption{Performance of the node explanation on the synthetic dataset. The sparsity is calculated for the retrieved node mask. The high accuracy with empty explanation by large size of negative set. This also points to the pitfall of using Accuracy alone as the measure of evaluating explanations when ground truth is available. }
    \small
    \begin{NiceTabular}{@{}lr ccc@{}}
    \toprule
         \textbf{Method} & \textbf{Prec.} $\uparrow$ & \textbf{Acc.} $\uparrow$& \textbf{Sparsity} $\downarrow$&  \textbf{\fidelity{}} $\uparrow$\\
         \midrule
\gnnexp{}  &            0.40 &           0.81 &               1.68 &      0.63 \\
{PGM}  &            0.75 &           0.93 &   2.30 &      0.81 \\
{PGE}  &     0.19 &  0.21   &  1.73 & 0.58      \\
{SubgraphX} &             0.72 &           0.94 &               1.25 &      0.82 \\
Grad          &            0.87 &           0.95 &               1.61 &      0.70 \\
GradInput     &            0.89 &           \bf{0.96} &               1.61 &      0.56 \\
$\emptyset$ - Explanation  &             0.00 &           0.84 &                0.00 &      0.55 \\
\approach{} $(\tau=.85)$       &            \bf{0.95} &           0.90 &               \bf{0.65} &      0.91 \\
\approach{} $(\tau=.98)$     &            0.90 &           0.90 &               1.04 &      \bf{0.98} \\
\bottomrule
\end{NiceTabular}
    \label{tab:synthetic_node}
\end{table}
 
For synthetic datasets, unlike real-world datasets, we have the liberty of having known ground truth explanations (GTE).
We report \textit{accuracy} and \textit{precision} of explanations by comparing them against the GTE in Table~\ref{tab:synthetic_node}. 
Note that for soft masking approaches, like \gnnexp{}, hard masks need to be constructed by a discretizing step that is choosing top-k important attributions.
In our experiments, we strengthen the soft-masking baselines by setting the $k$ to the exact size of the ground truth.
In addition, we also report the sparsity and \fidelity{} of corresponding explanations as for the earlier cases.

We observe that while the gradient-based methods achieve the highest accuracy, \approach{} achieves the best precision, sparsity, and \fidelity{}. 
The decrease in accuracy is due to two reasons. 
First, the higher accuracy of gradient-based methods is due to our decision to discretize the soft-masking baselines by allowing them active knowledge of the GTE size. 
On the other hand, \approach{}, natively outputs hard masks agnostic to the GTE size. 
PGE performs worst in terms of precision and accuracy.
SubgraphX retrieves explanations similar to Grad, but has the drawback of very high runtime, see Appendix~\ref{sec:experiments}.
To fully answer RQ~\ref{rq:groundtrutheval}, we also compared the achieved faithfulness of the explanation methods. 
Table~\ref{tab:syn2_faith} shows the model's and explainers' accuracies at different epochs. 
Even at epoch 1 with random weights, we see that the baselines achieve high precision on this synthetic dataset. 

\approach{} achieves the first accuracy peak at 200 epochs, where the model still cannot differentiate the motif from the BA nodes.
A similar peak at epoch 200 is observed for the \gnnexp{} and GradInput. 
Moreover, the gradient-based methods achieve the best or close to the best performance for the untrained GCN. 
From these observations, we conclude that the underlying assumption -- the better the GNN, the better the explanation -- not necessarily need to hold. 
In this synthetic setting, the GNN only needs to differentiate the two communities, and all but one explanation method can explain the house motif. 

In conclusion, for RQ~\ref{rq:groundtrutheval}, \approach{} outperforms all baselines in terms of precision and faithfulness while gradient-based methods achieve the highest accuracy.
Note that the good performance of gradient-based approaches conflicts with our conclusions when experimenting with real-world data.
We believe that this is a possible threat to be aware of while evaluating explanations. 
Specifically, some explanation methods perform admirably in more straightforward and synthetic cases but are not robust and do not generalize well when used in real-world scenarios.
However, \approach{} is quite robust to different types of models, data, and evaluation regimes.

\begin{table}[ht!]
    \centering
    \small
    \setlength{\tabcolsep}{3.5pt}
    \caption{Experiments on faithfulness according to \cite{sanchez2020evaluating} measured with Kendall's tau $\tau_{\text{Kendall}}$ of the retrieved explanation precision and test accuracy. To simulate different model performances, we saved the GCN model during different epochs on the synthetic dataset. 
    For \approach{} $\tau=.85$, the ordering of the explanations' performances nearly perfectly align with the order of the models performance. 
    }
    \begin{NiceTabular}{lrrrrrr r}
\toprule
\textbf{Method} &  \textbf{1}    &  \textbf{200}  &  \textbf{400}  &  \textbf{600}  &  \textbf{1400} &  \textbf{2000} & $\boldsymbol{\tau_{\text{Kendall}}}$ \\
\midrule

\gnnexp{}   &  0.50 &  0.54 &  0.41 &  0.40 &  0.37 &  0.40 & $-0.73$ \\
PGM      & 0.83 &  0.47 & 0.68 &  0.71 &  0.76 &  0.75 & 0.20 \\
PGE &0.20  &0.19   &0.23 & 0.21 & 0.23  & 0.20  & 0.36 \\
Grad           &  \bf{0.94} &  0.80 &  0.62 &  0.73 &  0.84 &  0.87 & 0.07 \\
GradInput      &  0.88 &  0.89 &  0.78 &  0.79 &  0.87 &  0.89 & 0.07 \\
\approach{} $(\tau=.85)$           &   0.00 &  \bf{0.92} &  \bf{0.88} &  \bf{0.93} &  \bf{0.94} &  \bf{0.94} & \bf{0.73} \\
\approach{} $(\tau=.98)$     &   0.00 &  0.90 &  0.85 &  0.84 &  0.87 &  0.90 & 0.47 \\
\bottomrule
\end{NiceTabular}
    \label{tab:syn2_faith}
\end{table}
\begin{figure*}[htbp]
    \centering
    \includegraphics[width=0.9\textwidth]{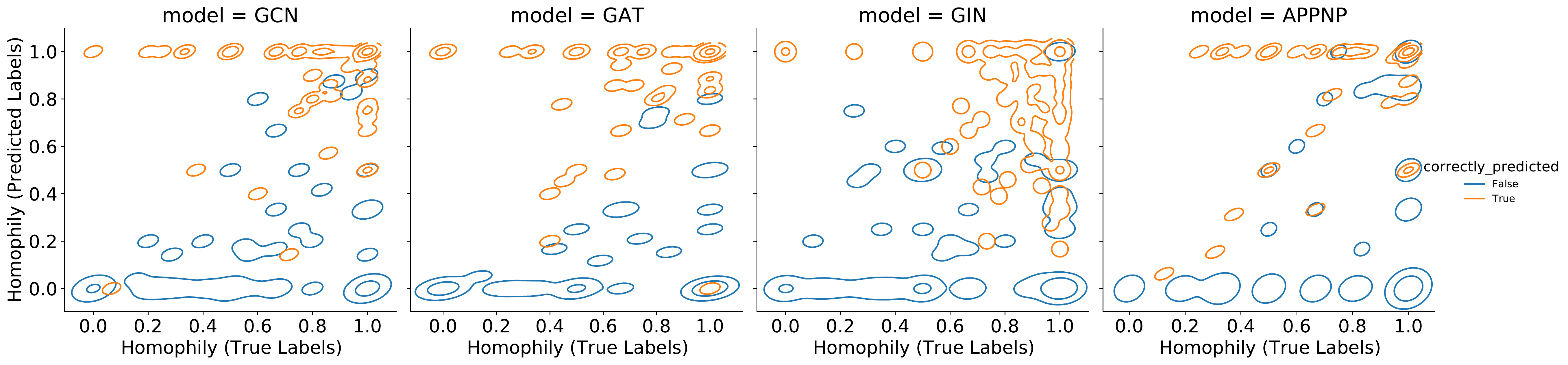}
    \caption{Dataset - PubMed. The joint distribution of the homophily with respect to the nodes selected in the \approach{}'s explanation ($\tau=.85$) with true and predicted labels. The orange contour lines correspond to the distributions for correctly predicted nodes, and the blue one corresponds to incorrectly predicted nodes.}
    \label{fig:homophily_predictions}
\end{figure*}

\section{Utility of Explanations}
\label{sec:homophily-explanations}

One of the motivations of post-hoc explanations is to derive insights into a model's inner workings. 
Specifically, we use explanations to analyze the behavior of different GNNs with respect to \textit{homophily}.
GNNs are known to exploit the homophily in the neighborhood to learn powerful function approximators. We use the retrieved explanations by \approach{} to verify the models' tendency to use homophily for node classification and identify the model's mistakes.

Formally, we define the \textit{homophily} of the node as the fraction of the number of its neighbors, which share the \textit{same label} as the node itself. 
In what follows, we use homophily to refer to the homophily of a node with respect to the selected nodes in its explanation. \textit{True homophily} is computed based on the true labels of the neighbor nodes. 
Similarly, \textit{predicted homophily} is computed based on the predicted labels of the neighboring nodes.

\subsection{Wrong Predictions Despite High Homophily} 

We start to investigate the joint density of true and predicted homophily of a given node. 
In Figure~\ref{fig:homophily_predictions}, we illustrate the effect of connectivity and neighbors' labels on the model's decision for a query node (for the PubMed dataset).
Several vertices corresponding to blue regions spread over the bottom of the plots have low predicted homophily. 
These nodes are incorrectly predicted, and their label differs from those predicted for the nodes in their explanation set. 
The surprising fact is that even though some of them have high true homophily close to 1, their predicted homophily is low. This also points to the usefulness of our found explanation in which we conclude that nodes influencing the current node do not share its label. 
So despite the consensus of GNNs reliance on homophily, they can still make mistakes for high homophily nodes, for example, when information from features is misaligned (or leads to a different decision) with that from the structure.

\subsection{Incorrect High Homophily Predictions} 

We also note that for GIN and APPNP, we have some nodes with true homophily and predicted homophily close to 1 but are incorrectly predicted. This implies that the node itself and the most influential nodes from its computational graph have been assigned the same label. We can conclude that the model based its decision on the right set of nodes but assigned the wrong class to the whole group. 

\subsection{Influence of Nodes with Low Homophily}

Nodes in the orange regions on the extreme left side of the plots exhibited low true homophily but high predicted homophily. The class labels for such nodes are correctly predicted. However, the corresponding nodes in the explanation were assigned the wrong labels (if they were assigned the same labels as that of the particular node in question, its predicted homophily would have been increased). The density of such regions in APPNP is lower than in GCN, implying that APPNP makes fewer mistakes in assigning labels to neighbors of low homophily nodes. For example, there are no nodes with true homophily $0$, which incorrectly influenced its neighbors. These nodes can be further studied with respect to their degree and features. 

\section{Conclusion}
We formulated the key properties a GNN explanation should follow: \textit{validity}, \textit{sparsity}, and \textit{stability}. While none of these measures alone suffice to evaluate a GNN explanation, we introduce a new metric called \textit{fidelity} that reflects these desiderata into a single measure. We provide theoretical foundations of fidelity from the area of \textit{rate-distortion theory}.
Furthermore, we proposed a simple combinatorial procedure \approach{}, which retrieves sparse \textit{binary masks} for the features and relevant nodes while trying to optimize for fidelity. Our experimental results on synthetic and real-world datasets show massive improvements not only for fidelity but also concerning evaluation measures employed by previous works.

\appendices

\ifCLASSOPTIONcompsoc
  \section*{Acknowledgments}
\else
  \section*{Acknowledgment}
\fi
The work is partially supported by the project "CampaNeo" (grant ID 01MD19007A) funded by the BMWi,  and the European Commission (EU H2020, ``smashHit", grant-ID 871477).

\ifCLASSOPTIONcaptionsoff
  \newpage
\fi

\bibliographystyle{abbrv}
\bibliography{references}

\pagebreak

\section{Additional details for the experiments}
\label{sec:experiments}

\mpara{Datasets.}
We use three well-known citation network datasets \textbf{Cora}, \textbf{CiteSeer} and \textbf{PubMed} from \citep{yang2016revisiting} where nodes represent documents and edges represent citation links. 
The class label is described by a similar word vector or an index of category. 
In addition, we use in Appendix~\ref{sec:app_amazon} the Amazon Computer dataset from~\citep{shchur2018pitfalls}, where nodes represent products and edges represent that two products are frequently bought together. The features are bag-of-words from product reviews, and the task is to predict the product category.
Statistics for these datasets can be found in Table~\ref{tab:dataset_stats}.
We used the datasets, including their training and test split from the PyTorch Geometric Library, which corresponds to the data published by \citep{yang2016revisiting} 
respectively~\citep{shchur2018pitfalls}. No default training and test split exist for the Amazon Computers dataset, so we generate one by randomly selecting 20 nodes of each class as the training set and another 1000 nodes as the test set.

Since the runtime of the explainers depends more on the size of the computational graphs than the size of the datasets, Table~\ref{tab:size_computational_graph} states the statistics of the computational graphs for the 300 randomly selected nodes and 2-layered GNNs. 
Since the citation datasets are sparse, these three datasets have smaller computational graphs. 
In contrast, the dense Amazon graph contains computational graphs of size more than 6000 nodes.

\begin{table}[htbp]
\centering
 
\caption{Datasets and statistics. The test accuracy is calculated on 1000 nodes.  
} 
\label{tab:dataset_stats}
\setlength{\tabcolsep}{3.5pt}
\begin{NiceTabular}{@{}lrrrrrrrr@{}}
\toprule
       &   &      &       &  & \multicolumn{4}{c}{\textbf{Test Accuracy}}  \\\cmidrule{6-9}
\textbf{Name }& \textbf{Class} & $d$ & $|V|$ & $|E|$& \textbf{GCN}    &     \textbf{GAT}     &     \textbf{GIN} & \textbf{APPNP}     \\
\midrule
Cora & 7 & 1433 & 2708  & 10556 &      0.794 &    0.791 &     0.679 & 0.799  \\
CiteSeer & 6 & 3703 & 3327  & 9104  &     0.675 &       0.673 &    0.480& 0.663 \\
PubMed & 3 & 500  & 19717 & 88648 &   0.782 &       0.765 &     0.590 &0.782  \\
Amazon & 10 & 767 & 13752 & 491722 & 0.787 & 0.797 & 0.471 & 0.810\\
\bottomrule
\end{NiceTabular}
\end{table}

\begin{table}[hb]
    \centering
    \caption{
    The number of nodes in the computational graph of the 300 randomly selected and explained nodes for 2-layer GNNs. 
    }
    \label{tab:size_computational_graph}
    \begin{NiceTabular}{lrrrr}
\toprule
{Dataset} &  Minimum &  Median &     Mean &  Maximum \\
\midrule
Cora     &     2 &    19.0 &    40.75 &   213 \\
CiteSeer &     1 &     7.0 &    14.28 &   137 \\
PubMed   &     3 &    32.0 &    56.34 &   527 \\
Amazon  &     1 &   708.0 &  1716.89 &  6428 \\
\bottomrule
\end{NiceTabular}
\end{table}

\mpara{Hyperparameters.}
For \approach{}, we retrieved explanations for the threshold $\tau=0.85$ and $\tau=0.98$ and with $K=10$. All \fidelity{} values were calculated based on $100$ samples. 
For all baselines, we use the default hyperparameters. 
The GNN on the synthetic dataset we trained on 80\% of the data and used Adam optimizer with a learning rate of $0.001$ and a weight decay of $0.005$ for $2000$ epochs.
We use the default training split of $10$ samples per class and applied Adam optimizer with a learning rate of $0.01$ and a weight decay of $0.005$ for $200$ epochs on the real datasets. 
We refer to the available implementation for further details of the models and the training of the GNNs.
We also include the saves of the trained model to increase the reproducibility. 
Our implementation is based on PyTorch Geometric 1.6~\citep{Fey/Lenssen/2019} and Python 3.7. 
All methods were executed on a server with 128 GB RAM and Nvidia GTX 1080Ti.

\mpara{Experiments on synthetic dataset.}
The synthetic dataset is generated by generating two communities consisting of house motifs attached to BA graphs. Each node has eight feature values drawn from $\mathcal{N}(0,1)$ and two features drawn from $\mathcal{N}(-1,.5)$ for nodes of the first community or $\mathcal{N}(1,.5)$ otherwise. In addition, to follow the published implementation of GNNExplainer, the feature values are normalized within each community, and within each community, $0.01\ \%$ of the edges are randomly perturbed. 

The eight labels are given by the following:
for each community, the nodes of the BA graph form a class, the 'basis' of the house forms a class, the 'upper' nodes form a class, and the rooftop is a class.
The used model is a three-layer GCN, which stacks each layer's latent representation and uses a linear layer to make the final prediction. 
The training set includes 80\% of the nodes. 

Since \gnnexp{} only returns soft edge mask, we sorted them and added both nodes from the highest-ranked edges until at least five nodes were selected. 
In this way, we retrieved hard node masks, which are necessary to compare with the ground truth. 

Table~\ref{tab:syn_runtime} states the runtime of the methods on the synthetic dataset.
The simple gradient-based methods, Grad and GradInput, are the fastest methods followed by PGE and \gnnexp{}. \approach{} $\tau=0.85$ is faster than \approach{} $\tau=0.98$, which needs to achieve a higher \fidelity{} threshold.
PGM takes around twice the time of \approach{} $\tau=0.98$. 
From the point of view of runtime, all the above methods are still usable in real-world post-hoc explanation scenarios. 
The only exception is SubgraphX, which already takes for this 3-layer GNN on the small synthetic dataset, on average more than one hour per node. 

\begin{table}
\small
    \centering
        \caption{Runtime in seconds on the synthetic dataset. We report the average runtime to retrieve an explanation for a single node and don't take any node-independent preprocessing into account. }
    \label{tab:syn_runtime}
    \begin{NiceTabular}{lr}
\toprule
Method &    Time \\
\midrule
\gnnexp{}   &   0.30 \\
PGM & 31.57\\
PGE &  0.05 \\ %
SubgraphX & 4895.03 \\
Grad           &   0.01 \\ %
GradInput      &   0.01\\ %
\approach{} ($\tau=.85$)          &   7.06 \\
\approach{} ($\tau=.98$)      &  15.26 \\
\bottomrule
\end{NiceTabular}
\end{table}

\mpara{Performance of model at different training epochs.}
Figure~\ref{fig:roar_accuracy} illustrates the performance during the epochs and Table~\ref{tab:roar_accuracy} states the details of the selected epochs. 
We selected the epochs such that the test accuracy increases monotonously. 
Furthermore, we reported the accuracy on the whole test set and only on the nodes belonging to the motif. 
For example, on epoch 200, the model has learned to differentiate the two communities but still cannot identify any node from the motif. 
Hence, the peak in precision for most explainers (see Table~\ref{tab:syn2_faith}) at epoch 200 is quite surprising. 

\begin{figure}
    \centering
    \includegraphics[width=0.7\linewidth]{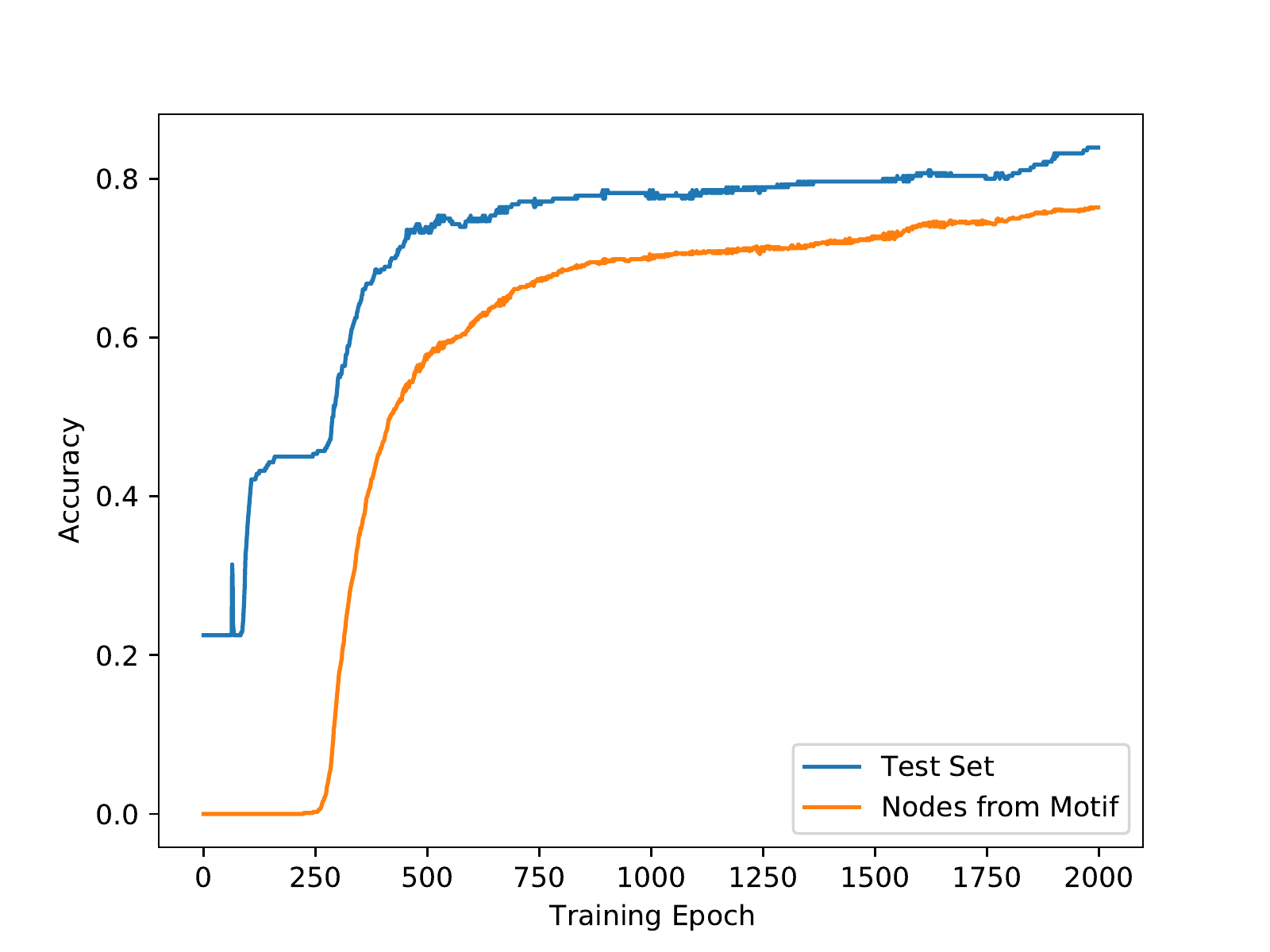}
    \caption{Accuracy of the GCN trained on synthetic dataset. We recorded the accuracy for the test set and the nodes in the motif, which are explained in the experiments.}
    \label{fig:roar_accuracy}
\end{figure}

\begin{table}
    \centering
        \caption{Accuracy of the selected epochs for the test set and the nodes in the motif.}
    \label{tab:roar_accuracy}
    \begin{tabular}{lrrr}
\toprule
  Epoch &  Test Accuracy &  Motif Accuracy \\
\midrule
      0 &           0.22 &            0.00 \\
    200 &           0.45 &            0.00 \\
   400 &           0.69 &            0.47 \\
    600 &           0.75 &            0.62 \\
   1400 &           0.80 &            0.72 \\
   2000 &           0.84 &            0.76 \\
\bottomrule
\end{tabular}
\end{table}

\section{Algorithmic Details}
\label{sec:algorithm_details}

\begin{figure*}
    \centering
    \includegraphics[width=0.7\textwidth]{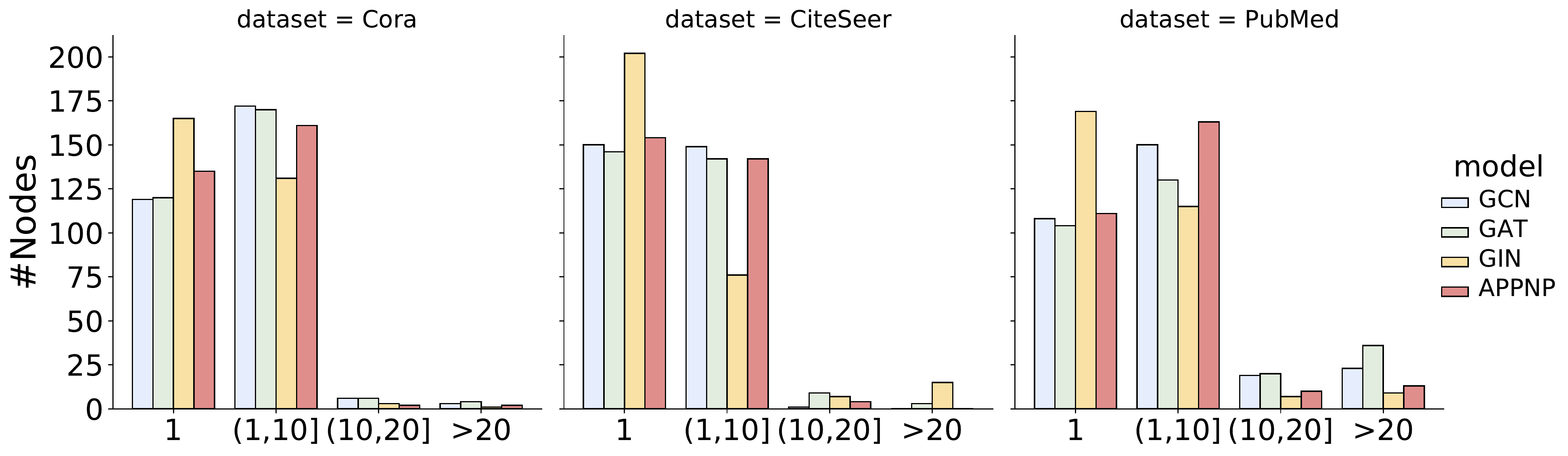}
    \caption{The number of explanations found with \approach{} at $\tau=0.85$.}
    \label{fig:multiplicity}
\end{figure*}

Multiple design choices can be considered in the design of \approach{}.
Specifically, the following design choices have an impact on the performance of \approach{}: (a) initialization of the first element and (b) iterative adding further elements.

\mpara{Initialization of first element.}
A single explanation $\{V_s, F_s\}$ consist of selected nodes $V_s$ and selected features $F_s$.
The challenge to select the first node and feature is the following:
Selecting only a node or only a feature yields a non-informative value, i.e., $\mathcal{F}(\{v\}, \emptyset) = c$ and $\mathcal{F}(\emptyset, \{f\}) = c$ for all $v\in V_n$ and $f\in F$ and some constant $c\in [0,1]$. 
The search for the optimal first pair would require $|V_p||F_p|$ evaluations of the fidelity, which is in most cases too expensive. 
Therefore, we propose to use a different strategy, which also contains information for the following iterations. 
Instead of evaluating, which pair of feature and node yields the highest increase, we assess the nodes and features in a maximal setting of the other. 
To be more precise, we assume that, if we search for the best node, all (possible) features $F_p$ were unmasked:
\begin{align}
\label{eq:node_argmax}
    \underset{v\in V_p}{\mathrm{argmax}} ~~\mathcal{F}(\{v\}, F_p)
\end{align}
Similarly for the features, we assume that all (possible) nodes are unmasked:
\begin{align}
\label{eq:feature_argmax}
    \underset{f\in F_p}{\mathrm{argmax}} ~~\mathcal{F}(V_p, \{f\})
\end{align}
Whichever of the nodes or features yields the highest value is the first element of our explanation. 
Consequently, the next selected element is different from the first element, e.g., if we first choose a node, the next element is always that feature, which yields the highest fidelity based on that single node. 
We perform this initialization again for each explanation since for each explanation, the maximal sets of possible elements $V_p$ and $F_p$ are different.

\begin{minipage}{0.46\textwidth}
\begin{algorithm}[H]
    \caption{\approach{}-Multiple$(n,\tau)$}
\label{alg:explain_multi}
\textbf{Input:} node $n$, threshold $\tau$ \\
\textbf{Output:} set of explanations with at least $\tau$-\fidelity{}
    \begin{algorithmic}[1]
\State $V_n\leftarrow$ set of vertices in $G(n)$
\State $F \leftarrow$ set of node features
\State \textbf{return} \approach{}-Recursive$(\tau,  V_n, F)$
\end{algorithmic}
\end{algorithm}

\begin{algorithm}[H]
    \caption{\approach{}-Recursive$(\tau, V_p, F_p)$}
\label{alg:real_recursive}
\textbf{Input:} threshold $\tau$, possible nodes $V_p$, possible feature $F_p$  \\
\textbf{Output:} set of explanations with at least $\tau$-\fidelity{}
    \begin{algorithmic}[1]
\State $\mathcal{S} = \emptyset$, $V_r=V_p$, $F_r=F_p$, $V_s=\emptyset$, $F_s=\emptyset$
\State $V_r=V_p$, $F_r=F_p$, $V_s=\emptyset$, $F_s=\emptyset$
\State {\small{$R_{V_p}\leftarrow $ list of $v\in V_p$ sorted by $\mathcal{F}(\{v\}, F_p)$ }} 
\State $R_{F_p}\leftarrow$ list of $f\in F_p$ sorted by $\mathcal{F}(V_p,\{f\})$ 
\State Add maximal element to $V_s$ or $F_s$ as in (\ref{eq:init})
\While{$\mathcal{F}(V_s,F_s) \ge \tau $}
\State $\Tilde{V}_s = V_s \cup \underset{v\in \operatorname{top}_K(V_r)}{\mathrm{argmax}} \mathcal{F}(\{v\}\cup V_s, F_s)$
\State $\Tilde{F}_s= F_s \cup \underset{f\in \operatorname{top}_K(F_r)}{\mathrm{argmax}} \! \mathcal{F}(V_s, \{f\}\cup F_s)$
\If{$\mathcal{F}(\Tilde{V}_s,F_s) \le \mathcal{F}(V_s,\Tilde{F}_s)$}
\State $F_r=F_r\setminus \{f\}$, $F_s=\Tilde{F}_s$
\Else
\State $V_r=V_r\setminus \{v\}$, $V_s= \Tilde{V_s}$
\EndIf
\EndWhile
\State $\mathcal{S} = \mathcal{S} \cup \{V_s, F_s\}$
\State $\mathcal{S} = \mathcal{S} \cup \text{\approach{}-Recursive}(\tau, K, V_p, F_r)$ \label{algline:recursive_call1}
\State $\mathcal{S} = \mathcal{S} \cup \text{\approach{}-Recursive}(\tau, K, V_r, F_p)$ \label{algline:recursive_call2}
\State \textbf{return} $\mathcal{S}$ 
\end{algorithmic}
\end{algorithm}
\end{minipage}

\mpara{Iterative search.}
The next part of our algorithm, which is the main contributor to the computational complexity, is the iterative search for additional nodes and features after the first element. 
A full search of all remaining nodes and features would require $|V_r|+|F_r|$ fidelity computations.
To significantly reduce this amount, we limited ourselves to a fixed number $K$ nodes and features, see Algorithm~\ref{alg:explain_multi}.
To systematically select the $K$ elements, we use the information retrieved in the initialization by Eq.~(\ref{eq:node_argmax}) and (\ref{eq:feature_argmax}).
We order the remaining nodes $V_r$ and $F_p$ by their values retrieved for Eq.~(\ref{eq:node_argmax}) and (\ref{eq:feature_argmax}) and only evaluate the top $K$.
In Algorithm~\ref{alg:explain_multi}, we have denoted these orderings by $R_{V_p}$ and $R_{F_p}$ and the retrieval of the top K remaining elements by $\operatorname{top}_K(V_r, R_{V_p})$ and $\operatorname{top}_K(F_r, R_{F_p})$. 
We also experimented with evaluating all remaining elements but observed no performance gain or inferior performance to the above heuristic. 
As a reason, we could identify that in some cases, the addition of a single element (feature or node) could not increase the achieved fidelity. 
Using the ordering retrieved from the "maximal setting", we enforce that those elements are still selected, which contain valuable information with a higher likelihood.
In addition, we experimented with refreshing the orderings $R_{V_p}$ and $R_{F_p}$ after some iterations but observed similar issues as in the unrestricted search.

\subsection{Multiple Explanations using \approach{}}
\label{sec:appendix_multiple_explanations}
We also evaluated and experimented a recursive variant of \approach{} that can retrieve multiple explanations with the desired fidelity. 

\mpara{\approach{} Variant for Multiple Explanations.}
In Algorithm~\ref{alg:real_recursive} we provide the generalization of \approach{} to extract multiple disjoint explanations of fidelity at least $\tau$.

We explicitly designed our algorithm in a way such that we can retrieve multiple explanations, see line \ref{algline:recursive_call1} and line \ref{algline:recursive_call2} of Algorithm~\ref{alg:real_recursive}.
We recursively call the Algorithm~\ref{alg:real_recursive} twice, once with a disjoint node-set, the call in line \ref{algline:recursive_call2} (only elements from the remaining set of nodes $V_r$ can be selected), and similarly in line \ref{algline:recursive_call1} with a disjoint feature set.
Hence, the resulting explanation selects disjoint elements from the feature matrix since either the rows or columns are different from before. 
As greedy and fast stop criteria, we used each further iteration, the maximal reachable fidelity of $\mathcal{F}(V_p, F_p)$.

\mpara{Analysing Multiplicity of Explanations}

As last part of our experiments, we want to study a recursive variant of \approach{}.
In algorithm~\ref{alg:real_recursive}, we start from all possible nodes in the computational graph and all possible features and select a small set of those elements as explanations.
The basic idea, is to recursively repeat \approach{}'s process to check the complements of the selected features or selected nodes for further high-fidelity explanations. 

Using above recursive variant, we find that
multiple (disjoint) explanations of fidelity at least $\tau$ are indeed possible and frequent. 
Figure~\ref{fig:multiplicity} shows the number of nodes having multiple explanations. 
We observe that, without exception, all GNNs yield multiple disjoint explanations with $\approx 50\%$ of the 300 nodes under study have $2$ to $10$ explanations.
Our algorithm's disjoint explanations can be understood as a disjoint piece of evidence that would lead the model to the same decision. 
We expect a much larger number of overlapping explanations if the restrictive condition on disjointness is relaxed. 
However, the objective here is to show that a decision can be reached in multiple ways. Each explanation is a practical realization of a possible combination of nodes and features that constitutes a decision.
We are the first to establish the multiplicity of explanations for GNN predictions.

\subsection{Proof of Theorem~\ref{thm:zorro}}
\label{sec:proof_thm_zorro}

\begin{proof} 
\mpara{Property 1.} Note that 
 \[
    \mathcal{F}(V_n, F_p) = 1 
    \]
i.e., the trivial explanation without any perturbation achieves the \fidelity{} value of $1 \ge \tau$. For any other set $S=\{V_s,F_s\}$ such that $V_s\subset V_n$ and $F_s\subset F_p$, line 7 of Algorithm~\ref{alg:recursive} ensures that $S$ is a possible explanation set only if 
\[
    \mathcal{F}(V_n, F_s) \ge \tau, 
\]
thereby completing the proof.

\mpara{Property 2.} As discussed above that the largest explanation which is the trivial explanation has size $|V_n|$ + $|F_p|$. So the while loop in Line 7 of the Algorithm~\ref{alg:recursive} in the worst-case results in $O(\max{|V_n|,F_p})$ iterations.

In each iteration, a node or feature is selected based on the resulting \fidelity{}. The \fidelity{} is computed in Algorithm~\ref{alg:fid} over a fixed number of samples. One forward pass using the perturbed data is required to compute \fidelity{} for a sample, which takes time $t$.  Combining all the above the run time complexity of \approach{} is then given by  $O(t\cdot \max(|V_n|+|F|))$.
    
\mpara{Property 3.} follows from property 1 of \approach{}, Theorem~\ref{thm:fidelity} and observing the fact that $\tau(1-\tau)$ decreases with respect to $\tau$ for $\tau\ge 0.5$.
\end{proof}
\mpara{Choice of Noisy Distribution $\mathcal{N}$.} Our choice of using the global distribution of features as the noisy distribution ensures that only plausible feature values are used.
Besides, our choice does not increase the bias towards specific values, which we would have by taking fixed values such as $0$ or averages.
One might argue that the irrelevant components can be set to $0$ rather than any specific noisy value. However, this might lead to several side effects: in the case of datasets for which a feature value of $0$ is not allowed or has some specific semantics or for models with some specific pooling strategy, for example, minpool.  More specifically, the idea of an irrelevant component is not that it is missing, but its value does not matter. Therefore to account for the irrelevancy of certain components given our explanation, we need to check for multiple noisy instantiations for the unselected components.  
\section{Further experiments}
\label{sec:further_experiments}
\subsection{Variations of Number of Samples}
We now study the effect of the number of samples (used to compute \fidelity{} in Algorithm~\ref{alg:fid}) on \approach's performance. We retrieved the explanations for the dataset Cora and the model GCN using $10, 50,$ and $150$ samples and the used default value of $100$. 
Table~\ref{tab:vary_samples} shows the results for node-sparsity, feature-sparsity, \fidelity{} and validity.
Overall we observe that the explanations retrieved using fewer samples are denser than those with 100 or 150 samples. 
In other words, more elements were added to the explanations, and more iterations within \approach{}'s algorithm were needed. 
Increasing the number of samples from $10$ to $50$ yields sparser explanations. 
A further gain in sparsification is seen for the default value of $100$ samples. 
In contrast, the $150$ samples show a mixed results with denser results for \approach{} $(\tau= 0.85)$ and sparser for \approach{} $(\tau= 0.98)$ as compared to the corresponding results when using $100$ samples. 

\begin{table}[htbp]
    \centering
        \caption{Results on Cora dataset with GCN model and varying number of samples $s$ used for \fidelity{} during retrieving the explanations. The stated \fidelity{} is calculated using 100 random samples.}
    \label{tab:vary_samples}
\begin{NiceTabular}{crrrrr}
\toprule
                      \multirow{2}{*}{Explainer} & \multirow{2}{*}{$s$} &  Node-&  Feature- &  \multirow{2}{*}{\fidelity{}} &  \multirow{2}{*}{Validity} \\
 &  &  Sparsity&  Sparsity &   &   \\                      
\midrule
  & 10  &               1.23 &                  4.39 &      0.80 &      0.99 \\
 Zorro                     & 50  &               1.28 &                  2.23 &      0.86 &      1.00 \\
   $(\tau = .85)$                   & 100 &               1.28 &                  1.91 &      0.87 &      1.00 \\
                      & 150 &               1.31 &                  1.96 &      0.87 &      1.00 \\ \midrule
  & 10  &               1.39 &                  5.02 &      0.88 &      0.99 \\
  Zorro                    & 50  &               1.51 &                  3.13 &      0.96 &      1.00 \\
     $(\tau = .98)$                 & 100 &               1.58 &                  2.69 &      0.97 &      1.00 \\
                      & 150 &               1.58 &                  2.56 &      0.97 &      1.00 \\
\bottomrule
\end{NiceTabular}
\end{table}

\begin{table}[htbp]
    \centering
        \caption{
    Varying the threshold $\tau$ for the dataset Cora and the model GCN. 
    }
    \label{tab:vary_tau}
    \begin{NiceTabular}{lrrrr}
\toprule
\multirow{2}{*}{$\tau$} &  Node-&  Feature- &  \multirow{2}{*}{Fidelity} &  \multirow{2}{*}{Validity} \\
   &  Sparsity&  Sparsity &   &   \\ 
\midrule
0.20 &               0.36 &                  0.63 &      0.27 &      0.73 \\
0.50 &               0.72 &                  1.15 &      0.57 &      0.95 \\
0.80 &               1.21 &                  1.78 &      0.83 &      1.00 \\
0.90 &               1.39 &                  2.13 &      0.91 &      1.00 \\
0.92 &               1.43 &                  2.23 &      0.93 &      1.00 \\
0.94 &               1.48 &                  2.36 &      0.94 &      1.00 \\
0.96 &               1.54 &                  2.55 &      0.96 &      1.00 \\
\bottomrule
\end{NiceTabular}
\end{table} %

\begin{table*}[p]
\small
    \centering
    \caption{ Results on real dataset with retrieved hard-masks. For those methods which already retrieve hard-masks (HM), the values are identical to those in Table~\ref{tab:real_main}.
    }
    \label{tab:additional_metrics_1}
    {
    \small
    \setlength{\tabcolsep}{3.5pt}
    \begin{NiceTabular}{llc ccc ccc ccc ccc}
\toprule
\multirow{2}{*}{\textbf{Metric}} & \multirow{2}{*}{\textbf{Method}} &
\multirow{2}{*}{\textbf{Transf.}} &\multicolumn{4}{c}{\textbf{Cora}} & \multicolumn{4}{c}{\textbf{CiteSeer}} & \multicolumn{4}{c}{\textbf{PubMed}} \\
 & &  &   GCN &   GAT &   GIN & APPNP &      GCN &   GAT &   GIN & APPNP &    GCN &   GAT &   GIN & APPNP \\

\midrule
\multirow[c]{11}{*}{\rotcell{Features-Sparsity}} & \multirow[c]{3}{*}{\gnnexp{}} & S-0.5 & 6.57 & 6.57 & 6.57 & 6.57 & 7.52 & 7.52 & 7.52 & 7.52 & 5.52 & 5.52 & 5.52 & 5.52 \\
 &  & S-0.7 & 6.06 & 6.06 & 6.06 & 6.06 & 7.01 & 7.01 & 7.01 & 7.01 & 5.01 & 5.01 & 5.01 & 5.01 \\
 &  & NT & 7.27 & 7.27 & 7.27 & 7.27 & 8.22 & 8.22 & 8.22 & 8.22 & 6.21 & 6.21 & 6.21 & 6.21 \\
 & \multirow[c]{3}{*}{Grad} & S-0.5 & 6.57 & 6.57 & 6.57 & 6.57 & 7.52 & 7.52 & 7.52 & 7.52 & 5.52 & 5.52 & 5.52 & 5.52 \\
 &  & S-0.7 & 6.06 & 6.06 & 6.06 & 6.06 & 7.01 & 7.01 & 7.01 & 7.01 & 5.01 & 5.01 & 5.01 & 5.01 \\
 &  & NT & 4.58 & 4.66 & 5.09 & 4.56 & 4.63 & 4.71 & 4.95 & 4.63 & 4.88 & 4.98 & 5.37 & 4.92 \\
 & \multirow[c]{3}{*}{GradInput} & S-0.5 & 6.57 & 6.57 & 6.57 & 6.57 & 7.52 & 7.52 & 7.52 & 7.52 & 5.52 & 5.52 & 5.52 & 5.52 \\
 &  & S-0.7 & 6.06 & 6.06 & 6.06 & 6.06 & 7.01 & 7.01 & 7.01 & 7.01 & 5.01 & 5.01 & 5.01 & 5.01 \\
 &  & NT & 4.59 & 4.67 & 5.10 & 4.56 & 4.63 & 4.71 & 4.95 & 4.64 & 4.88 & 4.98 & 5.40 & 4.92 \\
 & \approach{}  $(\tau = .85)$  & HM & 1.91 & 2.29 & 3.51 & 2.26 & 1.81 & 1.84 & 3.67 & 1.97 & 1.60 & 1.52 & 2.38 & 1.75 \\
 & \approach{}  $(\tau = .98)$  & HM & 2.69 & 3.07 & 4.34 & 3.18 & 2.58 & 2.60 & 4.68 & 2.78 & 2.55 & 2.58 & 3.21 & 2.86 \\
 
 \midrule
 \multirow[c]{13}{*}{\rotcell{Node-Sparsity}} & \multirow[c]{3}{*}{\gnnexp{}} & S-0.5 & 2.23 & 2.26 & 2.24 & 2.27 & 1.37 & 1.38 & 1.34 & 1.39 & 2.51 & 2.53 & 2.25 & 2.55 \\
 &  & S-0.7 & 1.88 & 1.94 & 1.94 & 1.96 & 1.25 & 1.25 & 1.24 & 1.30 & 2.15 & 2.19 & 1.95 & 2.18 \\
 &  & NT & 2.66 & 2.67 & 2.66 & 2.67 & 1.72 & 1.72 & 1.70 & 1.72 & 2.91 & 2.90 & 2.89 & 2.91 \\
  & PGM    & HM &  2.06 & 1.82 &  1.66 & 1.99 & 1.47 &  1.59 & 1.10 & 1.54 & 1.64 &  1.16 & 1.62 &  2.93 \\
 & PGE & HM & 1.86 & 1.86 & 1.78 & 1.94 &1.48 & 1.40& 1.36 & 1.41 & 1.91 &  1.81 & 1.85 & 1.92  \\ 
 & \multirow[c]{3}{*}{Grad} & S-0.5 & 2.33 & 2.33 & 2.33 & 2.33 & 1.30 & 1.30 & 1.30 & 1.30 & 2.80 & 2.80 & 2.80 & 2.80 \\
 &  & S-0.7 & 1.83 & 1.83 & 1.83 & 1.83 & 1.08 & 1.08 & 1.08 & 1.08 & 2.26 & 2.26 & 2.26 & 2.26 \\
 &  & NT & 2.81 & 2.71 & 2.91 & 2.73 & 1.79 & 1.76 & 1.80 & 1.76 & 3.34 & 3.29 & 3.41 & 3.25 \\
 & \multirow[c]{3}{*}{GradInput} & S-0.5 & 2.33 & 2.33 & 2.33 & 2.33 & 1.30 & 1.30 & 1.30 & 1.30 & 2.80 & 2.80 & 2.80 & 2.80 \\
 &  & S-0.7 & 1.83 & 1.83 & 1.83 & 1.83 & 1.08 & 1.08 & 1.08 & 1.08 & 2.26 & 2.26 & 2.26 & 2.26 \\
 &  & NT & 2.81 & 2.78 & 2.95 & 2.72 & 1.78 & 1.76 & 1.81 & 1.74 & 3.31 & 3.23 & 3.42 & 3.14 \\
 & \approach{}  $(\tau = .85)$  & HM & 1.28 & 1.30 & 1.90 & 1.16 & 1.05 & 0.92 & 1.36 & 0.83 & 1.07 & 0.87 & 1.77 & 0.79 \\
 & \approach{}  $(\tau = .98)$  & HM & 1.58 & 1.59 & 2.17 & 1.48 & 1.26 & 1.09 & 1.58 & 1.07 & 1.51 & 1.31 & 2.18 & 1.25 \\
 
 \midrule
 \multirow[c]{13}{*}{\rotcell{\fidelity{}}} & \multirow[c]{3}{*}{\gnnexp{}} & S-0.5 & 0.86 & 0.86 & 0.71 & 0.83 & 0.82 & 0.82 & 0.63 & 0.78 & 0.81 & 0.82 & 0.71 & 0.79 \\
 &  & S-0.7 & 0.72 & 0.72 & 0.57 & 0.67 & 0.71 & 0.69 & 0.55 & 0.64 & 0.68 & 0.72 & 0.66 & 0.71 \\
 &  & NT & 0.98 & 0.98 & 0.94 & 0.98 & 0.98 & 0.99 & 0.88 & 0.99 & 0.95 & 0.98 & 0.88 & 0.96 \\
  & PGM & HM & 0.84 &  0.77 &  0.60 &  0.89 & 0.92 & 0.93 &  0.73 &  0.95 &   0.78 &  0.69 &  0.74 &  0.96 \\ 
 & PGE & HM & 0.50 & 0.53 & 0.35 & 0.49 &0.64 &0.60 & 0.51 & 0.61 & 0.49 & 0.61  & 0.56 & 0.50  \\ 
 & \multirow[c]{3}{*}{Grad} & S-0.5 & 0.89 & 0.91 & 0.74 & 0.88 & 0.84 & 0.84 & 0.58 & 0.82 & 0.90 & 0.91 & 0.65 & 0.88 \\
 &  & S-0.7 & 0.80 & 0.84 & 0.60 & 0.82 & 0.67 & 0.65 & 0.42 & 0.65 & 0.86 & 0.86 & 0.56 & 0.84 \\
 &  & NT & 0.89 & 0.90 & 0.77 & 0.88 & 0.84 & 0.84 & 0.59 & 0.81 & 0.90 & 0.90 & 0.78 & 0.88 \\
 & \multirow[c]{3}{*}{GradInput} & S-0.5 & 0.87 & 0.90 & 0.75 & 0.88 & 0.81 & 0.81 & 0.60 & 0.80 & 0.88 & 0.90 & 0.70 & 0.87 \\
 &  & S-0.7 & 0.77 & 0.82 & 0.60 & 0.80 & 0.63 & 0.63 & 0.45 & 0.63 & 0.83 & 0.84 & 0.61 & 0.81 \\
 &  & NT & 0.88 & 0.89 & 0.79 & 0.88 & 0.82 & 0.82 & 0.59 & 0.80 & 0.90 & 0.90 & 0.79 & 0.87 \\
 & \approach{}  $(\tau = .85)$  & HM & 0.87 & 0.88 & 0.86 & 0.88 & 0.87 & 0.86 & 0.87 & 0.86 & 0.86 & 0.88 & 0.88 & 0.87 \\
 & \approach{}  $(\tau = .98)$  & HM & 0.97 & 0.97 & 0.96 & 0.97 & 0.97 & 0.97 & 0.97 & 0.96 & 0.96 & 0.97 & 0.97 & 0.96 \\
 
 \midrule
 \multirow[c]{13}{*}{\rotcell{Validity}} & \multirow[c]{3}{*}{\gnnexp{}} & S-0.5 & 0.89 & 0.94 & 0.79 & 0.90 & 0.88 & 0.88 & 0.67 & 0.86 & 0.84 & 0.87 & 0.63 & 0.83 \\
 &  & S-0.7 & 0.80 & 0.86 & 0.72 & 0.81 & 0.82 & 0.83 & 0.63 & 0.79 & 0.66 & 0.77 & 0.65 & 0.80 \\
 &  & NT & 0.98 & 0.98 & 0.94 & 0.98 & 0.98 & 0.99 & 0.88 & 0.99 & 0.95 & 0.98 & 0.87 & 0.96 \\
 & PGM & HM & 0.89 &  0.90 &  0.64 &  0.94 & 0.95 & 0.95 &  0.76 &  0.97 &   0.86 &  0.80 &  0.62 &  0.97 \\
 & PGE & HM & 0.51 & 0.54 & 0.34 &0.45  &0.62 & 0.59& 0.54  & 0.62 & 0.51 &  0.61 &0.57  & 0.48  \\ 
 & \multirow[c]{3}{*}{Grad} & S-0.5 & 0.96 & 0.98 & 0.89 & 0.96 & 0.95 & 0.97 & 0.77 & 0.95 & 0.98 & 0.99 & 0.80 & 0.98 \\
 &  & S-0.7 & 0.91 & 0.93 & 0.78 & 0.90 & 0.81 & 0.77 & 0.53 & 0.80 & 0.94 & 0.97 & 0.68 & 0.96 \\
 &  & NT & 0.97 & 0.99 & 0.96 & 0.98 & 0.96 & 0.97 & 0.86 & 0.96 & 0.99 & 0.99 & 0.95 & 0.99 \\
 & \multirow[c]{3}{*}{GradInput} & S-0.5 & 0.94 & 0.96 & 0.87 & 0.94 & 0.90 & 0.90 & 0.76 & 0.90 & 0.95 & 0.98 & 0.86 & 0.96 \\
 &  & S-0.7 & 0.89 & 0.93 & 0.77 & 0.88 & 0.74 & 0.72 & 0.57 & 0.75 & 0.91 & 0.94 & 0.67 & 0.92 \\
 &  & NT & 0.97 & 0.97 & 0.97 & 0.98 & 0.93 & 0.92 & 0.85 & 0.93 & 0.98 & 0.98 & 0.97 & 0.98 \\
 & \approach{}  $(\tau = .85)$  & HM & 1.00 & 1.00 & 0.83 & 1.00 & 1.00 & 1.00 & 0.77 & 1.00 & 0.90 & 1.00 & 0.84 & 1.00 \\
 & \approach{}  $(\tau = .98)$  & HM & 1.00 & 1.00 & 0.90 & 1.00 & 1.00 & 1.00 & 0.91 & 1.00 & 0.98 & 1.00 & 0.87 & 1.00 \\
 \bottomrule
\end{NiceTabular}
    }
\end{table*}

\settowidth\rotheadsize{Fidelity+prob}
\begin{table*}[p]
\small
    \centering
    \caption{ Results on real dataset with retrieved hard masks evaluated with four fidelity variants.
    }
    \label{tab:additional_metrics_2}
    {
    \small
    \setlength{\tabcolsep}{3.5pt}
    \begin{NiceTabular}{llc ccc ccc ccc ccc}
\toprule
\multirow{2}{*}{\textbf{Metric}} & \multirow{2}{*}{\textbf{Method}} &
\multirow{2}{*}{\textbf{Transf.}} &\multicolumn{4}{c}{\textbf{Cora}} & \multicolumn{4}{c}{\textbf{CiteSeer}} & \multicolumn{4}{c}{\textbf{PubMed}} \\
 & &  &   GCN &   GAT &   GIN & APPNP &      GCN &   GAT &   GIN & APPNP &    GCN &   GAT &   GIN & APPNP \\
 
 \midrule
 \multirow[c]{13}{*}{\rotcell{Fidelity+acc}} & \multirow[c]{3}{*}{\gnnexp{}} & S-0.5 & 0.45 & 0.34 & 0.41 & 0.44 & 0.40 & 0.32 & 0.67 & 0.37 & 0.47 & 0.27 & 0.38 & 0.23 \\
 &  & S-0.7 & 0.31 & 0.24 & 0.34 & 0.27 & 0.29 & 0.24 & 0.55 & 0.26 & 0.33 & 0.23 & 0.37 & 0.20 \\
 &  & NT & 0.78 & 0.78 & 0.82 & 0.83 & 0.82 & 0.84 & 0.89 & 0.81 & 0.64 & 0.51 & 0.45 & 0.63 \\
 & {PGM} & HM & 0.18 & 0.16 &0.25 &0.22 & 0.39 &0.39 & 0.40 &0.40 & 0.17 & 0.10 & 0.28 & 0.23 \\ 
  & {PGE} & HM & 0.06 &0.05 & 0.16& 0.09&0.09&0.10 &0.26 & 0.11 & 0.04 & 0.05 & 0.15&0.02\\ 
 & \multirow[c]{3}{*}{Grad} & S-0.5 & 0.32 & 0.42 & 0.28 & 0.60 & 0.40 & 0.35 & 0.41 & 0.39 & 0.45 & 0.30 & 0.33 & 0.36 \\
 &  & S-0.7 & 0.21 & 0.22 & 0.19 & 0.33 & 0.16 & 0.14 & 0.21 & 0.17 & 0.28 & 0.20 & 0.24 & 0.21 \\
 &  & NT & 0.72 & 0.71 & 0.64 & 0.75 & 0.68 & 0.59 & 0.67 & 0.65 & 0.63 & 0.49 & 0.48 & 0.58 \\
 & \multirow[c]{3}{*}{GradInput} & S-0.5 & 0.24 & 0.34 & 0.29 & 0.56 & 0.26 & 0.22 & 0.45 & 0.27 & 0.38 & 0.25 & 0.33 & 0.27 \\
 &  & S-0.7 & 0.16 & 0.19 & 0.20 & 0.25 & 0.09 & 0.10 & 0.22 & 0.12 & 0.23 & 0.17 & 0.23 & 0.18 \\
 &  & NT & 0.71 & 0.70 & 0.72 & 0.76 & 0.63 & 0.51 & 0.69 & 0.60 & 0.62 & 0.48 & 0.53 & 0.49 \\
 & \approach{} $\tau=0.85$ & HM & 0.31 & 0.33 & 0.40 & 0.33 & 0.46 & 0.42 & 0.56 & 0.48 & 0.29 & 0.26 & 0.31 & 0.30 \\
 & \approach{} $\tau=0.98$ & HM & 0.39 & 0.40 & 0.50 & 0.46 & 0.52 & 0.53 & 0.68 & 0.57 & 0.37 & 0.35 & 0.42 & 0.42 \\
 
 \midrule 
 \multirow[c]{13}{*}{\rotcell{Fidelity-acc}} & \multirow[c]{3}{*}{\gnnexp{}} & S-0.5 & 0.11 & 0.06 & 0.21 & 0.10 & 0.12 & 0.12 & 0.33 & 0.14 & 0.16 & 0.13 & 0.37 & 0.17 \\
 &  & S-0.7 & 0.20 & 0.14 & 0.28 & 0.19 & 0.18 & 0.17 & 0.37 & 0.21 & 0.34 & 0.23 & 0.35 & 0.20 \\
 &  & NT & 0.02 & 0.02 & 0.06 & 0.02 & 0.02 & 0.01 & 0.12 & 0.01 & 0.05 & 0.02 & 0.13 & 0.04 \\
 & {PGM} & HM & 0.14 & 0.17 &0.55 &0.15 & 0.09 & 0.09 & 0.29 &0.05 & 0.24 & 0.24 & 0.51 & 0.10 \\ 
 & {PGE} & HM & 0.56 &0.51& 0.71 & 0.56&0.34 & 0.37 &0.63 & 0.37 & 0.52 & 0.41 & 0.71&0.44\\ 

 & \multirow[c]{3}{*}{Grad} & S-0.5 & 0.04 & 0.02 & 0.11 & 0.04 & 0.05 & 0.03 & 0.23 & 0.05 & 0.02 & 0.01 & 0.20 & 0.02 \\
 &  & S-0.7 & 0.09 & 0.07 & 0.22 & 0.10 & 0.19 & 0.23 & 0.47 & 0.20 & 0.06 & 0.03 & 0.32 & 0.04 \\
 &  & NT & 0.03 & 0.01 & 0.04 & 0.02 & 0.04 & 0.03 & 0.14 & 0.04 & 0.01 & 0.01 & 0.05 & 0.01 \\
 & \multirow[c]{3}{*}{GradInput} & S-0.5 & 0.06 & 0.04 & 0.13 & 0.06 & 0.10 & 0.10 & 0.24 & 0.10 & 0.05 & 0.02 & 0.14 & 0.04 \\
 &  & S-0.7 & 0.11 & 0.07 & 0.23 & 0.12 & 0.26 & 0.28 & 0.43 & 0.25 & 0.09 & 0.06 & 0.33 & 0.08 \\
 &  & NT & 0.03 & 0.03 & 0.03 & 0.02 & 0.07 & 0.08 & 0.15 & 0.07 & 0.02 & 0.02 & 0.03 & 0.02 \\
 & \approach{} $\tau=0.85$ & HM & 0.00 & 0.00 & 0.17 & 0.00 & 0.00 & 0.00 & 0.23 & 0.00 & 0.10 & 0.00 & 0.16 & 0.00 \\
 & \approach{} $\tau=0.98$ & HM & 0.00 & 0.00 & 0.10 & 0.00 & 0.00 & 0.00 & 0.09 & 0.00 & 0.02 & 0.00 & 0.13 & 0.00 \\
 
 \midrule
 \multirow[c]{13}{*}{\rotcell{Fidelity+prob}} & \multirow[c]{3}{*}{\gnnexp{}} & S-0.5 & 0.64 & 0.63 & 0.45 & 0.60 & 0.57 & 0.53 & 0.58 & 0.50 & 0.37 & 0.39 & 0.27 & 0.35 \\
 &  & S-0.7 & 0.56 & 0.53 & 0.35 & 0.49 & 0.46 & 0.42 & 0.48 & 0.39 & 0.32 & 0.34 & 0.26 & 0.30 \\
 &  & NT & 0.72 & 0.74 & 0.79 & 0.72 & 0.68 & 0.67 & 0.76 & 0.67 & 0.44 & 0.46 & 0.35 & 0.46 \\
  & {PGM} & HM & 0.26 & 0.21 &0.21 &0.28 & 0.40 &0.39 & 0.33 &0.41 & 0.14 & 0.10 & 0.25 & 0.20 \\
   & {PGE} & HM & 0.07 &0.06& 0.11& 0.08&0.14 & 0.13 &0.20 & 0.17 & 0.02 & 0.04 & 0.11&0.02\\ 
 & \multirow[c]{3}{*}{Grad} & S-0.5 & 0.53 & 0.54 & 0.25 & 0.56 & 0.44 & 0.44 & 0.34 & 0.46 & 0.35 & 0.36 & 0.23 & 0.36 \\
 &  & S-0.7 & 0.35 & 0.36 & 0.14 & 0.40 & 0.22 & 0.23 & 0.16 & 0.26 & 0.26 & 0.28 & 0.16 & 0.28 \\
 &  & NT & 0.68 & 0.69 & 0.60 & 0.67 & 0.60 & 0.57 & 0.59 & 0.58 & 0.43 & 0.43 & 0.37 & 0.42 \\
 & \multirow[c]{3}{*}{GradInput} & S-0.5 & 0.50 & 0.52 & 0.26 & 0.55 & 0.39 & 0.40 & 0.39 & 0.43 & 0.33 & 0.35 & 0.24 & 0.34 \\
 &  & S-0.7 & 0.31 & 0.34 & 0.16 & 0.38 & 0.18 & 0.21 & 0.17 & 0.23 & 0.24 & 0.26 & 0.14 & 0.26 \\
 &  & NT & 0.68 & 0.70 & 0.69 & 0.67 & 0.58 & 0.55 & 0.61 & 0.56 & 0.43 & 0.43 & 0.41 & 0.41 \\
 & \approach{}  $(\tau = .85)$  & HM & 0.26 & 0.25 & 0.35 & 0.29 & 0.34 & 0.30 & 0.48 & 0.34 & 0.16 & 0.16 & 0.24 & 0.18 \\
 & \approach{}  $(\tau = .98)$  & HM & 0.35 & 0.34 & 0.44 & 0.38 & 0.42 & 0.39 & 0.59 & 0.43 & 0.22 & 0.24 & 0.36 & 0.26 \\
 
 \midrule
 \multirow[c]{13}{*}{\rotcell{Fidelity-prob}} & \multirow[c]{3}{*}{\gnnexp{}} & S-0.5 & 0.28 & 0.25 & 0.20 & 0.25 & 0.26 & 0.28 & 0.26 & 0.25 & 0.22 & 0.23 & 0.26 & 0.21 \\
 &  & S-0.7 & 0.42 & 0.42 & 0.27 & 0.40 & 0.43 & 0.45 & 0.31 & 0.40 & 0.31 & 0.31 & 0.24 & 0.27 \\
 &  & NT & 0.10 & 0.07 & 0.05 & 0.08 & 0.09 & 0.09 & 0.07 & 0.07 & 0.09 & 0.09 & 0.06 & 0.07 \\
 & {PGM} & HM & 0.25 & 0.27 & 0.51 &0.20& 0.14 &0.12 & 0.24 &0.12 &0.22 & 0.28 & 0.48 & 0.13 \\
 & {PGE} & HM & 0.57 &0.53&0.65 & 0.54&0.42 & 0.40 &0.52 & 0.42 & 0.37 & 0.37 & 0.65&0.40\\ 

 & \multirow[c]{3}{*}{Grad} & S-0.5 & 0.04 & 0.02 & 0.09 & 0.04 & 0.06 & 0.05 & 0.18 & 0.06 & 0.03 & 0.02 & 0.11 & 0.02 \\
 &  & S-0.7 & 0.12 & 0.08 & 0.19 & 0.11 & 0.24 & 0.23 & 0.38 & 0.23 & 0.08 & 0.06 & 0.22 & 0.06 \\
 &  & NT & 0.01 & 0.01 & 0.03 & 0.01 & 0.02 & 0.03 & 0.08 & 0.02 & 0.00 & 0.00 & 0.03 & 0.00 \\
 & \multirow[c]{3}{*}{GradInput} & S-0.5 & 0.06 & 0.04 & 0.10 & 0.05 & 0.09 & 0.09 & 0.18 & 0.08 & 0.04 & 0.03 & 0.07 & 0.03 \\
 &  & S-0.7 & 0.15 & 0.10 & 0.19 & 0.13 & 0.27 & 0.25 & 0.33 & 0.25 & 0.10 & 0.08 & 0.22 & 0.07 \\
 &  & NT & 0.02 & 0.01 & 0.01 & 0.02 & 0.04 & 0.05 & 0.08 & 0.04 & 0.01 & 0.01 & 0.01 & 0.01 \\
 & \approach{}  $(\tau = .85)$  & HM & 0.18 & 0.12 & 0.12 & 0.14 & 0.11 & 0.11 & 0.18 & 0.06 & 0.21 & 0.19 & 0.05 & 0.15 \\
 & \approach{}  $(\tau = .98)$  & HM & 0.08 & 0.02 & 0.04 & 0.03 & 0.01 & -0.00 & 0.01 & -0.04 & 0.13 & 0.10 & 0.03 & 0.05 \\
 
 \bottomrule
\end{NiceTabular}
    }
\end{table*} 
\subsection{Further Variation of the \fidelity{} Threshold}
In the main experiments, we have evaluated the explanations with two thresholds of $\tau$, that is $0.85$ and $0.98$. 
Our implementation fo \approach{} returns not only the node and the feature mask but also the list of selected elements (node/features) with their corresponding \fidelity{}. 
Hence, all explanations with $\tau' < \tau$ can be computed with negligible computational cost. 
Table~\ref{tab:vary_tau} states the results for different \fidelity{} thresholds on the Cora dataset with the GCN model.
As expected, more selected elements are needed to reach higher \fidelity{} values. 
In other words, the node-masks and the feature-masks become denser.

\subsection{Further Evaluation Metrics}
\label{sec:app_further_metrics}

To confirm our findings of Section~\ref{sec:realeval}, we also evaluated with four metrics of \citep{yuan2020explainability}. 
First, we recap the definitions of these metrics and discuss the underlying assumptions. 
Since these metrics assume hard-mask explanations, we then state different transformations of soft-mask to hard-mask explanations. 
Afterward, we discuss the results and compare them with our fidelity, sparsity, and validity. 
For explanations in general many different metrics are proposed, see for example \cite{singh2021valid,singh2021extracting}.

\subsubsection{Definition and Discussion of Additional Metrics}
The four fidelity variants proposed in \citep{yuan2020explainability} are defined as follows:

\begin{definition}[Fidelity+acc~\citep{yuan2020explainability}]
The Fidelity+acc of explanation $\mathcal{S}$ is $0$ if $\Phi(X\backslash \mathcal{S})=\Phi(X)$ and $1$ otherwise. 
\end{definition}
\begin{definition}[Fidelity-acc~\citep{yuan2020explainability}]
The Fidelity-acc of explanation $\mathcal{S}$ is $0$ if $\Phi(\mathcal{S})=\Phi(X)$ and $1$ otherwise. 
\end{definition}
Fidelity-acc of an explanation is exactly $1-$ validity score of that explanation.

\begin{definition}[Fidelity+prob~\citep{yuan2020explainability}]
The Fidelity+prob of explanation $\mathcal{S}$ is given by 
\begin{align*}
    h(X) - h(X\backslash \mathcal{S}),
\end{align*}
where $h(\cdot)$ is the predicted probability of $\Phi$ for the predicted class. 
\end{definition}
\begin{definition}[Fidelity-prob~\citep{yuan2020explainability}]
The Fidelity-prob of explanation $\mathcal{S}$ is given by 
\begin{align*}
    h(X) - h(\mathcal{S}).
\end{align*}
\end{definition}

According to~\citep{yuan2020explainability}, high values of Fidelity+prob and Fidelity+acc and low values of Fidelity-prob and Fidelity-acc are indicative of good explanations. We argue that these scores cannot alone completely evaluate the goodness of an explanation and can often be misleading. First, note that using the complete input as a trivial explanation already achieves maximum scores for Fidelity+acc and Fidelity+prob measures. Second, Fidelity-acc and Fidelity-prob assume that an explanation is complete, i.e., any subset of input that is not included in the explanation is not valid. Such an explanation might not always exist. More concretely, we show in Appendix~\ref{sec:appendix_multiple_explanations} that one can retrieve multiple explanations which would obtain high scores with Fidelity+acc and Fidelity+prob.

\subsubsection{Deriving Hard Masks}
The above four metrics require hard-masks~\citep{yuan2020explainability}. 
Hence, for all methods that yield soft masks, an additional transformation into hard masks is needed. We employ two approaches: 
\begin{itemize}
    \item For each soft mask (feature mask, edge mask, node mask), we select the $x\%$ entries with the highest soft-mask value. We follow~\citep{yuan2020explainability} and use $30\%$ and $50\%$. We denote them with S-0.5 and S-0.7. 
    \item \cite{pope2019explainability} proposed to normalize the soft masks to values between $0$ and $1$ and keep all elements greater than $0.01$. We denote this transformation with $NT$.
\end{itemize}
One challenge for any transformation of soft masks to hard-masks is the introduced trade-off between sparsity and explanation performance. 
Especially with varying sizes of elements to choose from, e.g., a varying number of nodes in the computational graph, the selection of a transformation is a non-trivial task and can introduce bias in the evaluation. Besides, the transformation approaches are applied independently for different masks. Therefore, they cannot address the trade-off of adding elements in different masks.

\subsubsection{Results of Additional Metrics}
Table~\ref{tab:additional_metrics_1} contains the results with respect to our set of metrics for explanations. 
Table~\ref{tab:additional_metrics_2} contains the results with respect to the four fidelity variants. 

As our sparsity measures in Table~\ref{tab:additional_metrics_1} show, the three transformations of soft-masks to hard masks result in all cases in dense masks. 
The normalizing and threshold approach (NT) has a greater variance in the sparsity and depends on the distribution of the soft masks. 
For \gnnexp{}, the retrieved soft-masks are in the shape of a normal distribution. 
Hence, NT selects most of the elements and results in very dense masks for \gnnexp{}.
For Grad and GradInput, NT retrieves feature masks that are sparser than S-0.7. 
However, for the respective node masks, NT retrieves denser masks as compared to S-0.5. 
\approach{} retrieves by far the sparsest hard-mask explanations. 
The next sparsest approaches are PGM and PGE. 
The different sparsity levels also affect the comparability of results with respect to different evaluation metrics. 

With respect to \fidelity{}, \approach{} outperforms all baselines, except for \gnnexp{} combined with NT, which selects almost all the  input elements. 
A similar pattern is observed for validity, where \approach{} is only inferior to \gnnexp{} combined with NT in a single case. 

For the additional metrics in Table~\ref{tab:additional_metrics_2}, we first observe that \approach{} performs well with respect to fidelity-acc and fidelity-prob.
A separate analysis is not needed since fidelity-acc is given by 1 - validity. 
For fidelity-prob, \approach{} achieves among the lowest scores, which stands for a high quality of the explanations. 
Taking \approach{}'s sparser masks into account, we clearly outperform all other baselines.
For fidelity+acc, we observe that \approach{} performs similar to the baselines with the S-0.7 transformation. 
For fidelity+prob, we observe a more diversity in performance over different datasets and GNN models. 
In general, \approach{} $\tau=0.98$ achieves higher scores in fidelity+acc and fidelity+prob than \approach{} $\tau=0.85$ in all cases, which indicates that we add discriminative elements to achieve a higher \fidelity{}. 
Moreover, the other hard-masking approaches, PGM and PGE, perform inferior to \approach{}. 

Overall, these experiments support our finding of Table~\ref{tab:real_main} that \approach{} retrieves sparse explanations of high quality. 

\subsection{Experiment on Amazon Computers}
\label{sec:app_amazon}
To evaluate the scalability of our \approach{} approach, we evaluated with the Amazon Computers dataset, which is denser than the citation graphs. 
Hence, as Table~\ref{tab:size_computational_graph}, the size of the computational graph for Amazon Computers is one magnitude larger than those of the citation graphs. 
Note that increasing the size of the dataset, in the sense of a higher number of overall nodes, while keeping the average degree similar has no effect on \approach{}'s runtime for retrieving the explanation of a node. 
Table~\ref{tab:results_amazon} contains the results with respect to our four metrics validity, \fidelity{}, feature-sparsity and node-sparsity.

For all methods, except PGM and PGE, we observe an increase in the computed node-sparsity, which indicates that more nodes are needed to explain the prediction of the dense Amazon dataset. 
In contrast, PGM and PGE still retrieve only a few nodes. 
However, PGM and PGE yield in all but one case explanations with low \fidelity{} and validity score. 
Hence, these methods always derive sparse explanations even if the actual discriminative subset is large. 
Grad and GradInput also retrieve explanations with low \fidelity{} and validity scores. 
\gnnexp{} shows a more diverse behavior. 
For GCN, GAT, and APPNP, the validity score is high. The \fidelity{} scores are, however, low, which indicates that these explanations are not stable. 

\approach{} retrieves sparser explanations as compared to \gnnexp{}, Grad, and GradInput. 
The \fidelity{} scores for \approach{} is, as expected, very high in all cases. 
\approach{}'s validity score varies between the GNN models. 
For GCN and APPNP, we observe high validity scores. 
For GIN, the score of $0.45$, respectively, $0.48$ are the highest of the observed values. 
In general, this supports the pattern of Table~\ref{tab:real_main} that the validity scores of GIN are the lowest of all four GNNs. 
For GAT, the validity of \approach{} is only inferior to \gnnexp{}. 

Overall, we see our observations of Section~\ref{sec:realeval} confirmed on this dense Amazon dataset. 

\begin{table*}[htbp]
    \centering
        \caption{Results on Amazon Computers Dataset}
    \label{tab:results_amazon}
    \begin{NiceTabular}{llrrrr}
\toprule
    Model & Method & Validity & \fidelity{} & Feature-Sparsity & Node-Sparsity \\
\midrule
\multirow{5}{*}{GCN} & \gnnexp{} &     0.77 &     0.43 &                 6.64 &              6.14 \\
    & PGM &     0.19 &    0.24 &                 --- &              2.18\\
    & PGE &     0.16 &    0.15 &                 --- &              1.96\\
    & Grad &     0.32 &     0.12 &                 6.16 &              5.41 \\
    & GradInput &     0.31 &     0.12 &                 6.18 &              5.47 \\

    & \approach{} $(\tau = 0.85)$ &     0.86 &     0.88 &                 5.62 &              4.75 \\
    & \approach{} $(\tau = 0.98)$ &     0.93 &     0.98 &                 5.72 &              4.78 \\\midrule
\multirow{5}{*}{GAT} & \gnnexp{} &     0.75 &     0.58 &                 6.64 &              6.14 \\
    & PGM &     0.25 &  0.24 &                 --- &              2.30\\
    & PGE &     0.18 &    0.19 &                 --- &              1.85\\
    & Grad &     0.15 &     0.10 &                 5.99 &              5.08 \\
    & GradInput &     0.15 &     0.10 &                 6.08 &              5.38 \\
    & \approach{} $(\tau = 0.85)$ &     0.34 &     0.88 &                 5.02 &              3.99 \\
    & \approach{} $(\tau = 0.98)$ &     0.47 &     0.98 &                 5.37 &              4.18 \\ \midrule
\multirow{5}{*}{GIN} & \gnnexp{} &     0.26 &     0.26 &                 6.64 &              6.10 \\
    & PGM &     0.17 &    0.27 &                --- &              1.87\\
    & PGE &    0.10  &    0.54 &        ---&        1.69 \\
    & Grad &     0.15 &     0.26 &                 6.20 &              5.91 \\
    & GradInput &     0.17 &     0.26 &                 6.20 &              6.10 \\
    & \approach{} $(\tau = 0.85)$ &     0.45 &     0.90 &                 6.15 &              5.84 \\
    & \approach{} $(\tau = 0.98)$ &     0.48 &    0.99 &                 6.16 &              5.85 \\\midrule
\multirow{5}{*}{APPNP} & \gnnexp{} &     0.95 &     0.30 &                 6.64 &              6.15 \\
    & PGM &     0.82 &    0.38 &                 --- &              2.00\\
    & PGE &    0.37  &    0.15 &                 --- &              2.15\\
    & Grad &     0.31 &     0.19 &                 6.13 &              4.97 \\
    & GradInput &     0.30 &     0.18 &                 6.15 &              5.05 \\
    & \approach{} $(\tau = 0.85)$ &     0.82 &     0.92 &                 6.35 &              5.64 \\
    & \approach{} $(\tau = 0.98)$ &     0.82 &     0.99 &                 6.35 &              5.63 \\     
\bottomrule
\end{NiceTabular}
\end{table*} 
\section{Visualization of Explanation Examples for the Synthetic Dataset}
For a qualitative evaluation of the evaluated explainers, we also include in Figure~\ref{fig:syn2_legend} and Figure~\ref{fig:syn2_examples} visualization of the ground truth explanation (GTE) and explanations for a correctly and falsely predicted node. 
Figure~\ref{fig:syn2_legend} a) shows the general structure with the "two houses" on the left and right. 
For each node in a house, the GTE is all the nodes of its house. 
As we see in Figure~\ref{fig:syn2_legend}, even for the untrained model at epoch 1, the explainer Grad retrieves nearly perfect explanations, which is in line with our quantitative measures of Table~\ref{tab:syn2_faith}.
From Figure~\ref{fig:syn2_examples}, we observe that the explainers disagree on the reason for the prediction. 
\gnnexp{} and SubgraphX retrieve for both the correctly and the falsely predicted nodes a similar mask with some nodes in the GTE and a few nodes from the BA community. 
PGM retrieves four nodes of the GTE and one different direct neighbor for both cases. 
PGE retrieves for the correct prediction the same explanation as \approach{} $\tau=0.85$. 
However, for the false prediction, PGE retrieves three nodes of the GTE and two further nodes of the BA community. 
Grad and GradInput show a similar pattern to \gnnexp{} and SubgraphX, but contain more nodes of the GTE and fewer nodes of the community in the correctly classified node than in the falsely labeled node. 
\approach{} retrieves for both nodes less than five nodes, which is the size of the GTE. 
\approach{} $\tau=0.98$ retrieves more nodes than \approach{} $\tau=0.85$. 
All retrieved nodes are part of the GTE for the correctly classified node.
All retrieved nodes are not part of the GTE but nodes of a house from the other community for the falsely labeled node. 

\begin{figure*}[htbp]
    \centering
    \begin{tabular}{ccc}
           \includegraphics[width=150pt]{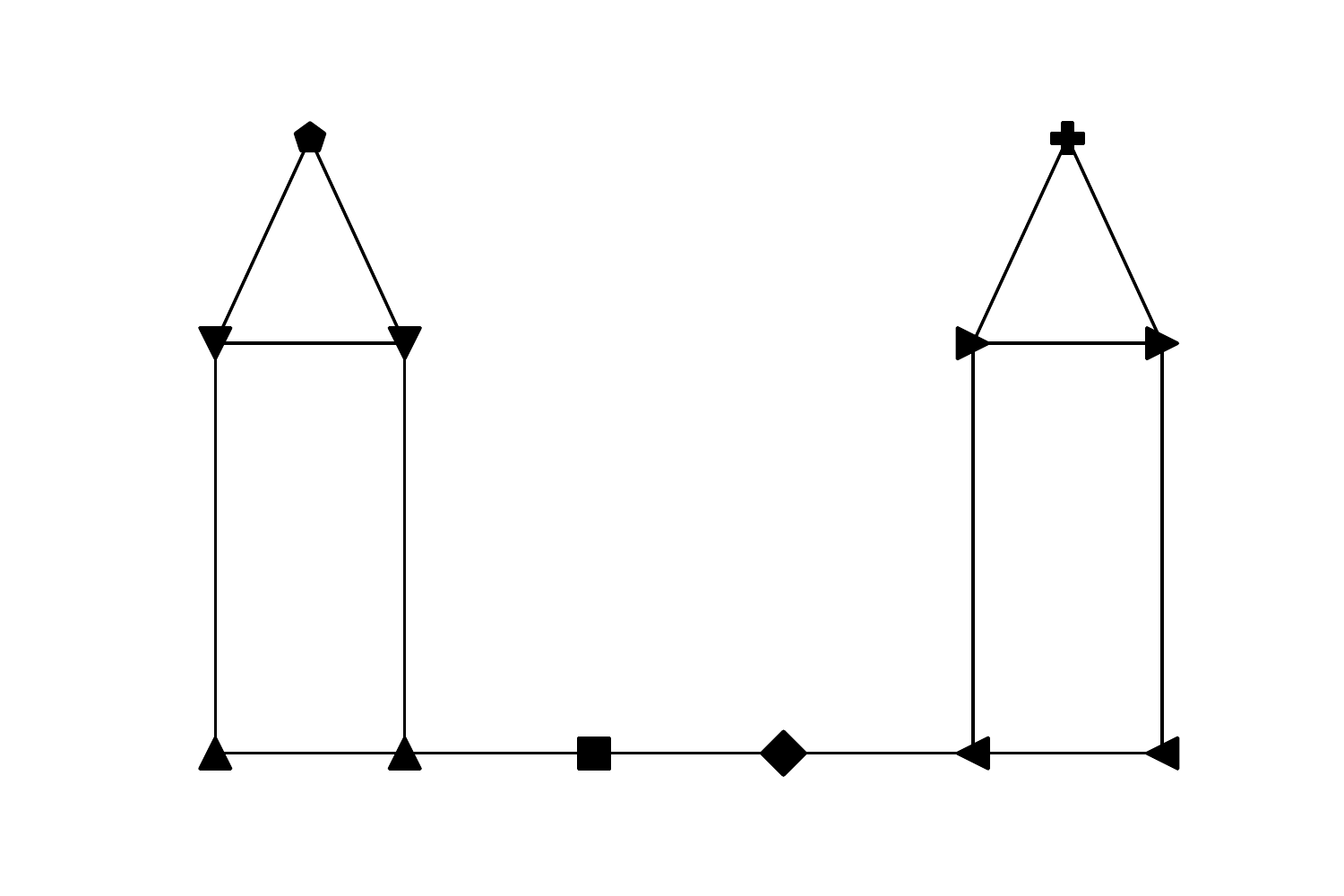}  & \includegraphics[width=100pt]{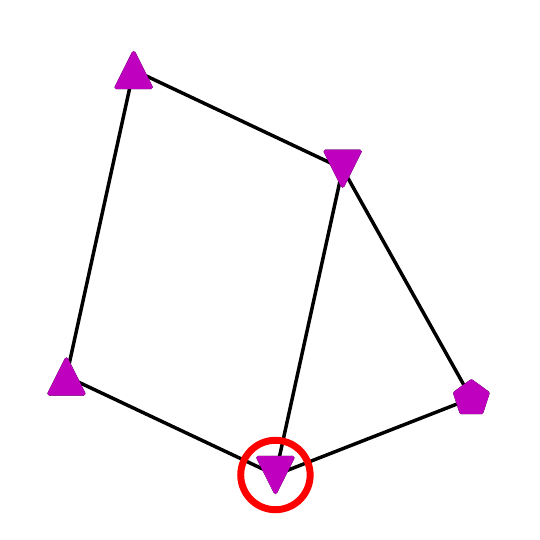}
           & \includegraphics[width=100pt]{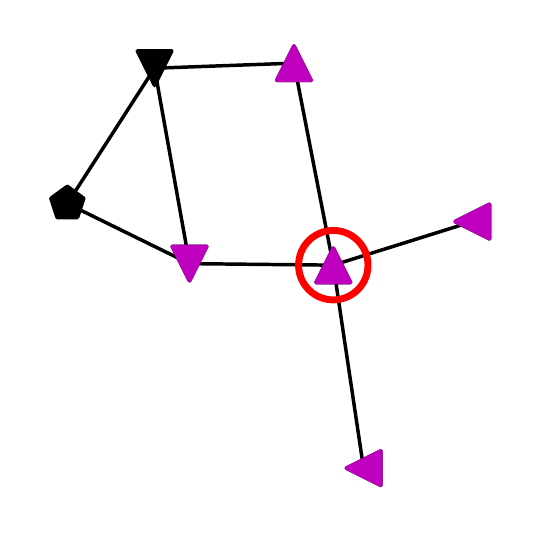}\\ 
          a) Legend & b) Node 300 & c) Node 302 \\
        \end{tabular}
    \caption{
    a) Representation of the different classes in the synthetic dataset with different symbols. Figure b) and c) visualize example explanations retrieved with Grad at epoch 1. 
    The colored nodes are contained in the explanation, and for simplicity, only nodes of the ground truth explanation and the explanation are shown. The circled node highlights the explained node. 
    }
    \label{fig:syn2_legend}
\end{figure*}

\begin{figure*}[p]
    \centering
    \begin{tabular}{|c|c||c|c|}
    \hline
     Node 300 & Node 300  & Node 302  & Node 302 \\
      (Correct Prediction) &  (Correct Prediction) &  (False Prediction) &  (False Prediction) \\
     \hline
     \includegraphics[width=100pt]{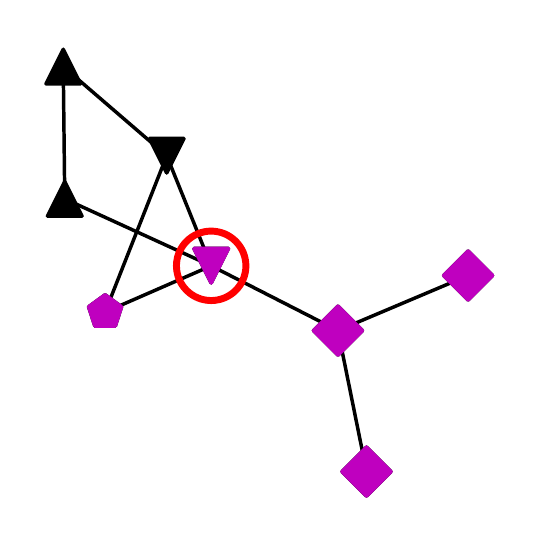}& \includegraphics[width=100pt]{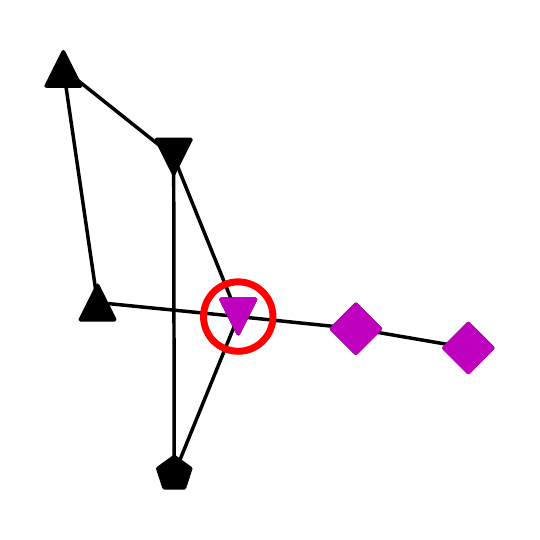}
     & \includegraphics[width=100pt]{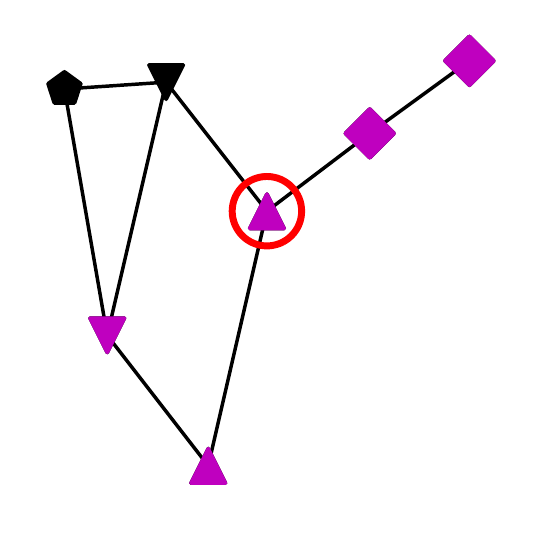}
     & \includegraphics[width=100pt]{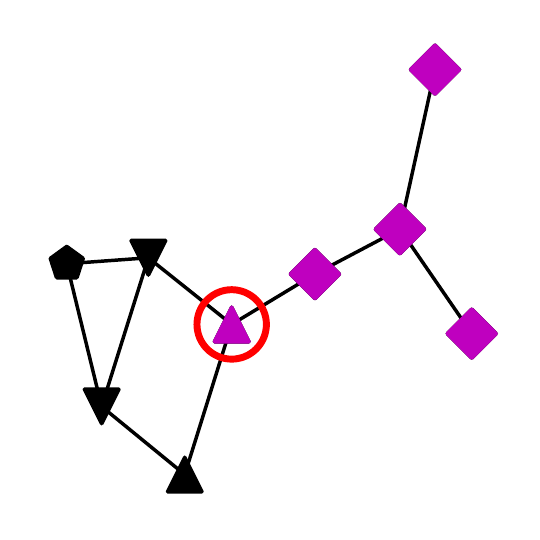} \\
         \gnnexp{} & SubgraphX &\gnnexp{} & SubgraphX \\
         \hline
     \includegraphics[width=100pt]{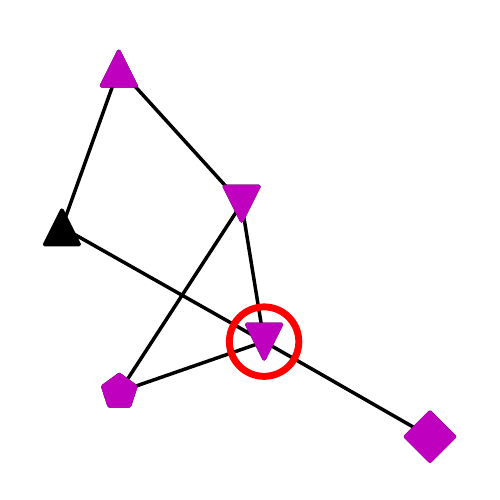}&
     \includegraphics[width=100pt]{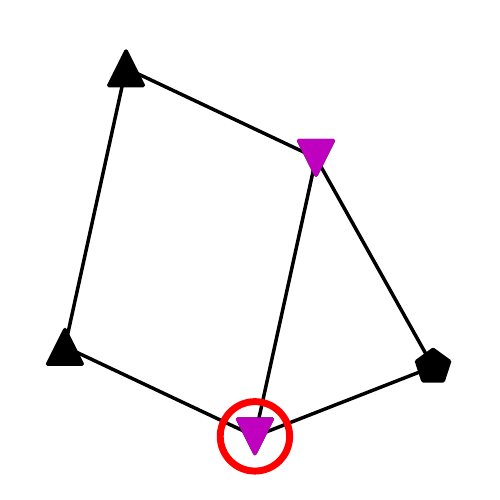}&
     \includegraphics[width=100pt]{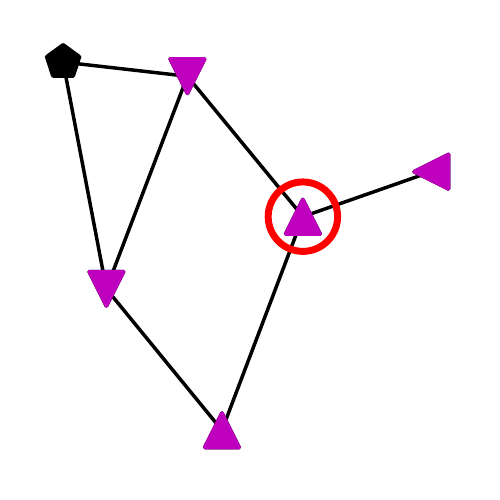} & \includegraphics[width=100pt]{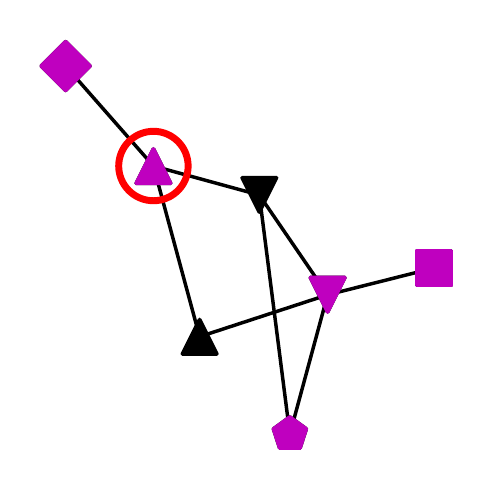}\\
         PGM & PGE & PGM & PGE \\
         \hline
         \includegraphics[width=100pt]{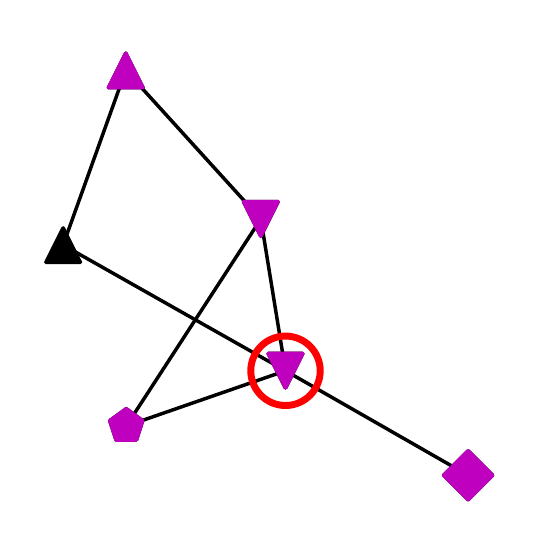}& \includegraphics[width=100pt]{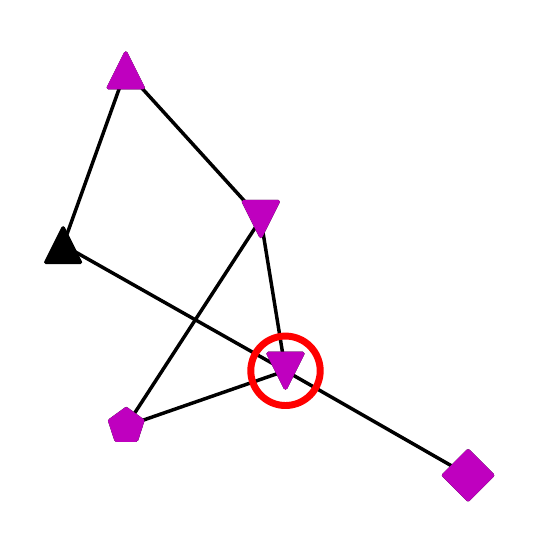}& \includegraphics[width=100pt]{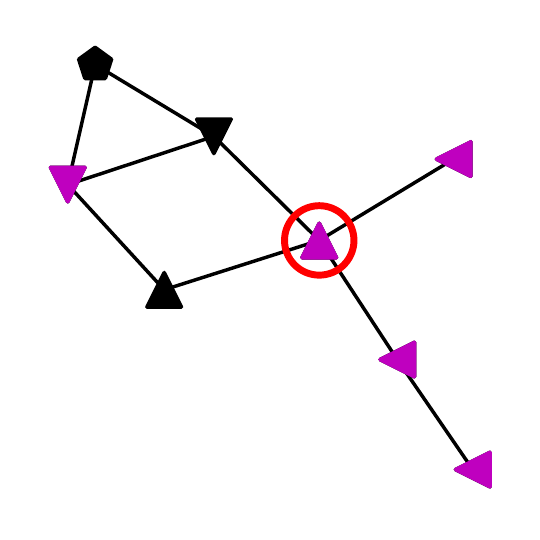} & \includegraphics[width=100pt]{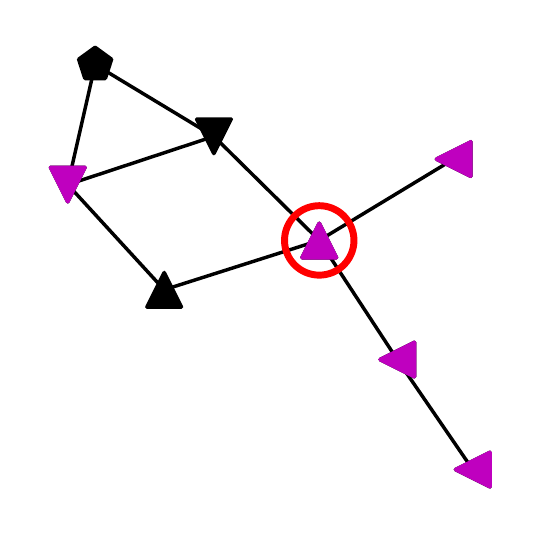}\\
         Grad & GradInput & Grad & GradInput \\
         \hline
         \includegraphics[width=100pt]{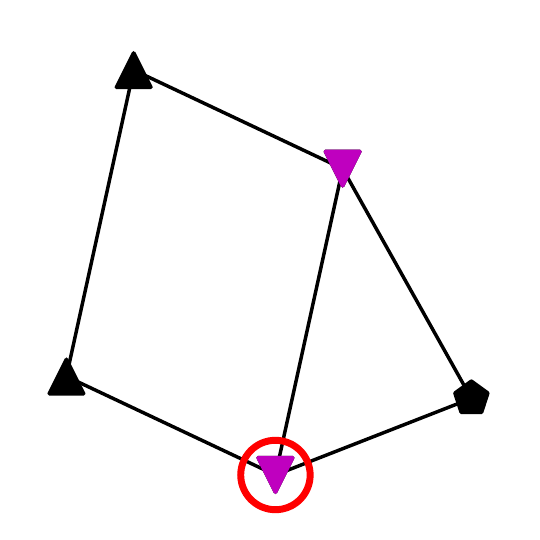}& \includegraphics[width=100pt]{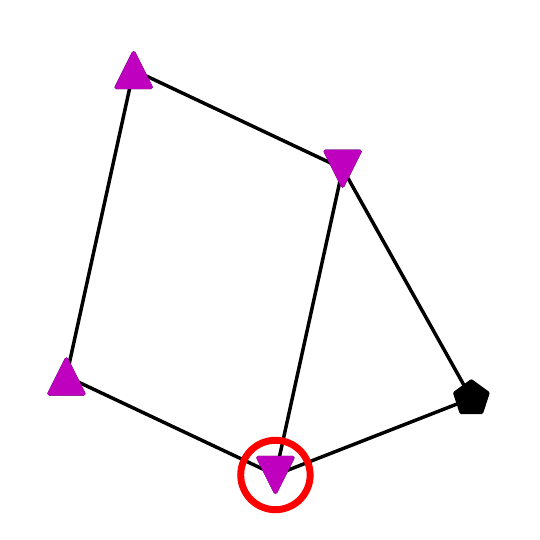}& \includegraphics[width=100pt]{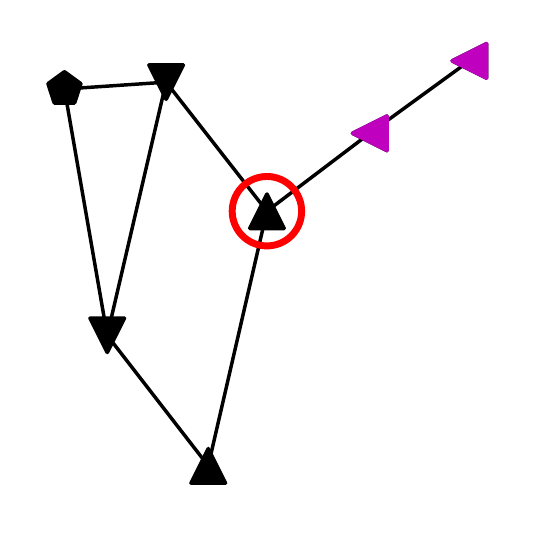} & \includegraphics[width=100pt]{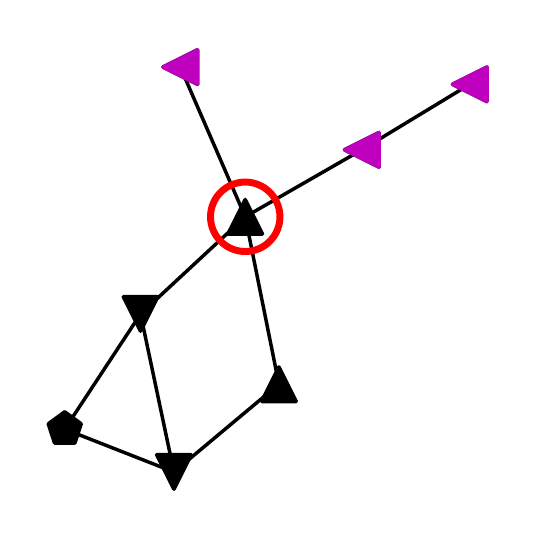}\\
         \approach{} $\tau=0.85$ & \approach{} $\tau=0.98$ & \approach{} $\tau=0.85$ & \approach{} $\tau=0.98$ \\
         \hline
    \end{tabular}
    \caption{Examples of explanations for two nodes of the synthetic dataset. 
    }
    \label{fig:syn2_examples}
\end{figure*}
 
\end{document}